\newcommand{\dvc}{d_{\mathrm{vc}}}
\title[Information-Theoretic Bayes Risk Lower Bounds for Realizable Models]{Information-Theoretic Bayes Risk Lower Bounds for\\ Realizable Models}
\begin{document}

\maketitle

\begin{abstract}
We derive information-theoretic lower bounds on the Bayes risk and generalization error of realizable machine learning models. In particular, we employ an analysis in which the rate-distortion function of the model parameters bounds the required mutual information between the training samples and the model parameters in order to learn a model up to a Bayes risk constraint. For realizable models, we show that both the rate distortion function and mutual information admit expressions that are convenient for analysis. For models that are (roughly) lower Lipschitz in their parameters, we bound the rate distortion function from below, whereas for VC classes, the mutual information is bounded above by $\dvc\log(n)$. When these conditions match, the Bayes risk with respect to the zero-one loss scales no faster than $\Omega(\dvc/n)$, which matches known outer bounds and minimax lower bounds up to logarithmic factors. We also consider the impact of label noise, providing lower bounds when training and/or test samples are corrupted.

\end{abstract}

\begin{keywords}%
  Sample complexity, Realizable models, Bayes risk, Information-theoretic methods
\end{keywords}

\section{Introduction}
A research challenge over the past decade has been to give a theoretical account of the generalization performance of high-dimensional machine learning models. In particular, {\em realizable} models, which fit perfectly to the training set, generalize better to unseen data than predicted by classical results based on Vapnik-Chervonenkis (VC) dimension or the Rademacher complexity \citep{ZhangBHRV17}, challenging textbook notions of overfitting and the bias-variance tradeoff \citep{dar2021farewell}. In response, researchers have developed new theoretical machinery for understanding and predicting generalization error, including tighter Rademacher \citep{BartlettFT17} and PAC Bayes \citep{NeyshaburBMS18,DziugaiteHGA21} analysis, explanations of the double descent phenomenon \citep{nakkiran2019deep}, and conditions for ``benign'' overfitting \citep{BartlettLL20}.

This paper follows two recent lines of work in understanding the sample complexity of high-dimensonal models. The first is the use of information-theoretic measures to understand sample complexity. \citet{pmlr-v51-russo16} and \citet{XuR17} bound the generalization error in terms of the mutual information between the training samples and the output of a learning algorithm; small mutual information is sufficient to ensure generalization. \citet{SteinkeZ20} derive tighter generalization upper bounds by introducing a ``ghost sample'' and analyzing the {\em conditional} mutual information. Along somewhat different lines, \citet{XuR20} upper bound the minimum excess risk (MER) of Bayes models, giving average-case sample complexity bound in terms of mutual information. \citet{NoklebyB16} and \citet{KolahiMKB21} give complementary lower bounds on Bayes risk using techniques from rate-distortion theory.

Second is an emphasis on the ``fast'' convergence rates achieved when models are (nearly) realizable. \citet{mammen1995asymptotical} and \citet{tsybakov2004optimal} show that, neglecting complexity and log terms, generalization error decays as $O(1/n)$ for VC classes satisfying a margin property that roughly corresponds to the data being nearly separable. These bounds are tight in the minimax sense. More recently, \citet{SteinkeZ20} and \citet{grunwald2021pac} show that $O(1/n)$ rates are achievable for a wide variety of learners, including VC classes and learners that ensure differential privacy, by extending the recent line of work on information-theoretic arguments for sample complexity.

In this paper, we seek to understand in what senses these ``fast'' convergence rates are optimal. In particular, we derive lower bounds on the excess Bayes risk of realizable models via a rate-distortion analysis similar to \cite{NoklebyB16,KolahiMKB21}. These results bound the {\em average-case} performance from below. We provide conditions in which these lower bounds are tight, in which case they generalize the minimax tightness discussed above to tightness on the average over a model family.

\subsection{Summary of Contributions}
We derive lower bounds on the excess Bayes risk for realizable models. These bounds follow from a {\em rate-distortion} analysis of a Bayesian model family, in which the rate-distortion function $R(D)$ specifies the number of bits needed to learn a model, and the mutual information $I(Z^n;W)$ specifies the number of bits provided by the training set about the model. In Sections \ref{sect:bayes.learning} and \ref{sect:rd.bounds}, we formalize the problem setting and review the rate-distortion framework.
In Section \ref{sec:bounds-on-rate-distortion}, we present conditions under which it is possible to bound $R(D)$ for realizable models. Whenever the classifier is sufficiently sensitive to perturbations in the model parameters, a Shannon lower bound on $R(D)$ holds, and the rate-distortion function is proportional to the number of model parameters $d_w$.
In Section \ref{sec:bounds-on-MI}, we present bounds on the mutual information $I(Z^n;W)$. For realizable models we derive a novel expression for $I(Z^n;W)$, and we further show that for VC classes the mutual information is bounded by $O(\dvc\log(n/\dvc))$. In this case, we obtain excess risk bounds that scale as $\Omega((\dvc/n)^{\dvc/d_w})$; in the case where $\dvc = d_w$ we recover the fast rates discussed above. We demonstrate the bounds on two simple cases, and show conditions in which the bounds are not tight.
In Section \ref{sect:mer}, we relate our results to the recent results of \citet{XuR20} and \citet{KolahiMKB21} on the {\em minimum} excess risk, deriving a new outer bound on the MER for realizable models that matches the lower bounds derived in the previous section.
In Section \ref{sect:margins}, we consider the impact of label noise on our bounds. Under mild conditions, we show that label noise on the {\em training} set has a negligible order-wise impact on the excess risk. We also consider noisy training and test distributions that satisfy the Tsybakov-Mammen margin condition mentioned above, deriving a margin-dependent lower bound on the excess Bayes risk.
Finally, in Appendix \ref{app:smooth.models} we include for completeness analogous results for smooth, non-realizable models.

\section{Problem Formulation}\label{sect:bayes.learning}
\vspace{-.1in}
\paragraph{Notation.}
We use uppercase letters to refer to random variables and lowercase letters to refer to their realizations, i.e. $X = x$. We use $P_X$ to indicate distributions chosen by ``nature'' and considered fixed, and we use $Q_X$ to indicate distributions designed by the learner. For continuous or discrete distributions $P_X$, we let $p_X(x)$ indicate the density or mass function. Let $\mathbb{E}_X[X]$ denote the expectation of random variable $X$ (with respect to $P_X$) and $\mathbb{P}_X[E]$ denote the probability of the event $E$. When possible we suppress subscripts. Finally, we use $\mathbb{I}[E]$ for the Iverson bracket, i.e. $\mathbb{I}[E] = 1$ iff $E$ is true and 0 otherwise.

\vspace{-.05in}
\subsection{Bayesian Statistical Learning}
\vspace{-.05in}
We consider statistical learning in a parametric Bayesian setting. The task is to learn a (perhaps random) function that predicts targets $Y \in \mathcal{Y}$ from input samples $X \in \mathcal{X} \subset \mathbb{R}^d$. Sample pairs $Z := (X,Y) \in \mathcal{Z} := \mathcal{X} \times \mathcal{Y}$ are drawn according to a distribution that is a member of a parametric family, i.e. $(X,Y) \sim P_{X,Y | W=w}$, where $w \in \mathcal{W} \subset \mathbb{R}^{d_w}$ is the vector of model parameters. Different from the standard statistical learning scenario, we treat the model parameters $W$ as a random variable with prior distribution $P_W$, and we suppose that both the model family and the prior $P_W$ are known to the learner. Instead of trying to determine which of a set of classifiers best predicts the data, we attempt to determine which of a set of data distributions---indexed by $W$---best fits the data.

Define the training set $Z^n := \{Z_1, \cdots, Z_n\}$ of $n$ samples drawn i.i.d. from $P_{Z | W} := P_{X,Y|W}$. A {\em parametric learning rule} is a distribution $Q_{\widehat{W} | Z^n }$ that generates estimates $\widehat{W}$ of the model parameters from the training data. Two ``reasonable'' choices are $Q_{\widehat{W} | Z^n } = P_{W | Z^n}$, in which estimates are drawn according to the posterior, and $Q_{\widehat{W} | Z^n } = \delta_{\widehat{W}}(\arg\max_w p(w|Z^n))$, the Dirac measure on the maximum a posteriori estimate of $W$.
However, we do not make specific assumptions about $Q_{\widehat{W} | Z^n }$ and instead lower bound performance over all possible learning rules.

We assess the performance of a learning rule $Q_{\widehat{W} | Z^n}$ via a loss function $\ell: \mathcal{Z} \times \mathcal{W} \to \mathbb{R}$ that scores the estimate $\widehat{W}$ against a test sample $Z$ drawn from $P_{Z | W}$. In this paper we are primarily interested in the 0-1 loss:
\begin{equation}
    \ell_{\text{0-1}}(z;\widehat{w}) = \mathbb{I}\left[\arg\max_{y^\prime} p(y^\prime | x, \widehat{w}) \neq y\right],
\end{equation}
but many of our results can be generalized to other loss functions, including the cross-entropy loss:
\begin{equation}
    \ell_{\text{ce}}(z;\widehat{w}) = -\log p(y|x, \widehat{w}).
\end{equation}

We are interested in the loss averaged over $P_W$, $P_{Z|W}$, and $Q_{\widehat{W}|Z^n}$. We suppose the test and training samples are taken to be conditionally independent given $W$, i.e. 
$
    P_{Z, Z^n, W} = P_{Z|W} \otimes \left( \prod_{i=1}^n P_{Z_i | W} \right) \otimes P_W.
$

\begin{definition}[Excess Loss]
For every realization pair $w, v \in \mathcal{W}$, the {\em excess loss} is the cost of choosing $\widehat{W} = v$ when $W=w$ is the true model, averaged over $P_{Z|W=w}$:
\begin{equation}
    D(w || v) := \mathbb{E}_{P_{Z|W=w}} [\ell(Z ; v) - \ell(Z;w)].
\end{equation}
\end{definition}
We suppose that $D(w || v)$ is a {\em quasi-divergence}, i.e. $D(w || v) \geq 0$, with equality if $v = w$. We do {\em not} assume $D(w || v) = 0$ only if $w = v$. This choice accounts for overparameterized models, for which there may be $w \neq v$ such that $p(z|w) = p(z|v)$.

Taking a further expectation over $Q_{\widehat{W}|Z^n}P_W$ gives the {\em Bayes risk}. 
\begin{definition}[Bayes Risk]
    The {\em Bayes risk} of a parametric learning rule  $Q_{\widehat{W} | Z^n}$ given $n$ training samples, $Z^n$, is its expected loss over the joint distribution $P_{Z, Z^n, W, \widehat{W}} = Q_{\widehat{W}|Z^n}P_{Z^n|W}P_{Z|W}P_{W}$:
    \begin{equation}
        L_n(Q_{\widehat{W}|Z^n}) := \mathbb{E}_{Z,Z^n,W,\widehat{W}}\left[ \ell(Z;\widehat{W})\right].
    \end{equation}
\end{definition}
We compare the Bayes risk of the learning rule $Q_{\widehat{W}|Z^n}$ to the expected loss when $W$ is known.
\begin{definition}[Excess Bayes Risk]
    The {\em excess Bayes risk} of learning rule $Q_{\widehat{W}|Z^n}$ given $n$ training samples is the gap between the Bayes risk and the ``oracle'' Bayes risk:
    \begin{equation}
        E_n(Q_{\widehat{W}|Z^n}) := \mathbb{E}_{W, Z^n, \widehat{W}} [ D(W\| \widehat{W})] =  L_n(Q_{\widehat{W}|Z^n}) - L^*,
    \end{equation}
    where
    \begin{equation}
        L^* := \mathbb{E}_{Z,W}\left[ \ell(Z;W)\right].
    \end{equation}
\end{definition}
Because the excess loss is a quasi-divergence, the excess Bayes risk is always non-negative.

By defining the loss function in terms of parameter choices $\widehat{w} \in \mathcal{W}$, we implicitly restrict our attention to (perhaps randomized) plug-in classifiers that choose $\widehat{W}$ from $Z^n$, it classifies test samples $X$ according to $P_{Y|X,W=\widehat{W}}$. This is not optimal in general; typically the Bayes-optimum classifier is based on the posterior $P_{Y|X,Z^n} = \mathbb{E}_{W} P_{Y|X,W} \otimes P_{W | Z^n}$. Nevertheless, modern machine learning practice tends toward the learning of a single model parameter, and restricting attention to the plug-in case allows us to derive stronger results, so we focus on this case. In Section \ref{sect:mer} we also discuss in detail the performance gap in making the restriction to parametric learning rules.

\subsection{Realizable models}
Our primary focus is realizable models, i.e. model families $P_{Z|W}$ where the target $Y$ is a deterministic function of a test point $X$ given $W$.
\begin{definition}[Realizable model]
    We say that a model family is {\em realizable} if $p(y|x,w) \in \{0,1\}$ for every $x \in \mathcal{X}, w \in \mathcal{W}$.
\label{def:realizable}
\end{definition}
\citet{SteinkeZ20} consider realizable models from a frequentist standpoint: the concept class is taken to be rich enough that it can fit the training set with negligible error. We make the further assumption that the underlying data distribution is separable; this is a strengthened version of the Tsybakov-Mammen condition \citep{tsybakov2004optimal}, and ensures that the training set can be fit perfectly as $n \to \infty$. In Section \ref{sect:margins} we relax this assumption somewhat and consider the impact of label noise on our bounds.

\subsection{Excess Bayes Risk vs. Generalization Error}
The excess Bayes risk measures a performance gap that is both similar to and separate from the generalization gap. Whereas excess risk indicates the gap between the test performance of a learned model and the performance of the best-possible model, the generalization error indicates the gap between a learned model's performance on the training data and on unseen data. 
\begin{definition}[Expected generalization gap]
The {\em expected generalization gap} of learning rule $Q_{\widehat{W}|z^n}$ given $n$ samples from the model is defined as
\begin{equation}
    G_n(Q_{\widehat{W}|Z^n}) = \mathbb{E}_{Z, Z^n, W,\widehat{W}} \left[\ell(Z; \widehat{W}) - \frac{1}{n} \sum_{i \in [n]}\ell(Z_i; \widehat{W})\right].
\end{equation}
\end{definition}

As we will see below, $E_n(Q_{\widehat{W}|z^n})$ and $G_n(Q_{\widehat{W}|z^n})$ often have the same scaling laws. However, for realizable models, these gaps can be identical. This is because the optimal model has zero test error, and it is always possible to obtain zero training error.
\begin{lemma}\label{lem:risk.and.generalization}
    For realizable models and a learning rule $Q_{\widehat{W}|Z^n}$ that interpolates any training set,
\begin{equation}
     E_n(Q_{\widehat{W}|Z^n}) =  G_n(Q_{\widehat{W}|Z^n}) =  L_n(Q_{\widehat{W}|Z^n}) .
\end{equation}    
\end{lemma}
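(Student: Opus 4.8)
The plan is to prove the two equalities separately; both collapse to the fact that realizability forces the per-sample losses to vanish. First I would show that the oracle risk $L^*$ is zero. By Definition~\ref{def:realizable}, for each $x$ and $w$ there is exactly one label $y^*(x,w) := \arg\max_{y'} p(y'|x,w)$ with $p(y^*(x,w)\,|\,x,w)=1$ (using that $\sum_y p(y|x,w)=1$); hence under $P_{Z|W=w}$ the target satisfies $Y = y^*(X,w)$ with probability one, so $\ell_{\text{0-1}}(Z;W) = \mathbb{I}[y^*(X,W)\neq Y] = 0$ almost surely (and likewise $\ell_{\text{ce}}(Z;W) = -\log 1 = 0$ a.s.). Taking expectations gives $L^* = \mathbb{E}_{Z,W}[\ell(Z;W)] = 0$, and the definition of excess Bayes risk then immediately yields $E_n(Q_{\widehat{W}|Z^n}) = L_n(Q_{\widehat{W}|Z^n}) - L^* = L_n(Q_{\widehat{W}|Z^n})$, which is the second claimed equality.

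Next I would address the generalization gap. By linearity of expectation and the definition of the Bayes risk,
\begin{equation}
  G_n(Q_{\widehat{W}|Z^n}) = L_n(Q_{\widehat{W}|Z^n}) - \mathbb{E}_{Z^n,W,\widehat{W}}\left[\frac{1}{n}\sum_{i\in[n]}\ell(Z_i;\widehat{W})\right],
\end{equation}
so it remains to argue that the expected empirical loss vanishes. By hypothesis $Q_{\widehat{W}|Z^n}$ interpolates any training set: for every realization $z^n$, an estimate $\widehat{W}\sim Q_{\widehat{W}|Z^n = z^n}$ satisfies $\ell(z_i;\widehat{W}) = 0$ for all $i$ with probability one. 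Such a rule is well defined precisely because the model is realizable — for $P_{Z^n|W}$-almost every $z^n$ the true parameter $W$ itself has zero empirical loss, by the almost-sure argument above, so the set of interpolating parameters is nonempty. Hence $\mathbb{E}[\frac{1}{n}\sum_i \ell(Z_i;\widehat{W})] = 0$, giving $G_n(Q_{\widehat{W}|Z^n}) = L_n(Q_{\widehat{W}|Z^n})$ and completing the chain of equalities.

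I do not anticipate a genuine obstacle here: once realizability is unpacked the argument is bookkeeping. The only points requiring care are the measure-theoretic ``almost surely'' qualifiers — formally, that $\{(x,w) : p(y^*(x,w)|x,w) < 1\}$ is null under the relevant product measure, which is immediate from Definition~\ref{def:realizable} — and confirming that ``interpolates any training set'' is compatible with the randomized learning-rule formalism, where it simply means that the conditional support of $\widehat{W}$ given $Z^n = z^n$ lies inside the (almost surely nonempty) set of parameters with zero empirical loss on $z^n$.
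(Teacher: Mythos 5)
Your argument is correct and is exactly the "straightforward observation from the definitions" that the paper's proof gestures at: realizability gives $L^*=0$ (so $E_n=L_n$), and interpolation gives zero empirical loss (so $G_n=L_n$). No further comment needed.
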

Indeed, this equivalence holds in a stronger pointwise sense for any training dataset, $z^n$. 
Lemma~\ref{lem:risk.and.generalization} shows that a lower bound on the excess Bayes risk for a realizable model is also a lower bound on the generalization error (averaged over a prior) of that model family, treated as a concept class from a frequentist perspective. 

\vspace{-.08in}
\section{Information-theoretic Risk Bounds}\label{sect:rd.bounds}
\vspace{-.08in}

In the following, we derive lower bounds on the excess Bayes risk via analysis of the {\em rate-distortion function} $R(D)$ of the model parameters. The rate-distortion function \citep[Ch. 10]{CoverT06} of a random variable is used ordinarily to characterize the fundamental limits of lossy compression with respect to a distortion measure, e.g. how many bits does it take to represent $W$ up to a certain average squared error? Here, we compute the rate-distortion function $R(D)$ using the excess loss as the distortion measure, and we show that it characterizes the number of bits the {\em training set} $Z^n$ must provide about $W$ in order for the Bayes risk to be bounded by $D$.

By the data-processing inequality, we show that the number of bits conveyed by the training set $Z^n$ about $W$ is bounded by their mutual information $I(Z^n;W)$. This leads to the inequality $R(D) \leq I(Z^n;W)$ for any parametric learning algorithm: There exists a rule $Q_{\widehat{W} | Z^n}$ with excess Bayes risk $D$ only if $R(D) \leq I(Z^n;W)$. This idea is described in the context of estimation theory in \citep[Ch. 11-12]{wu2017lecture}.  \citet{NoklebyB16} and \citet{KolahiMKB21} use similar ideas to study the sample complexity of machine learning models.

We formalize this notion in the following theorem, which is a straightforward consequence of the data processing inequality and which appears in slightly different form in \citet{NoklebyB16} and \citet{KolahiMKB21}.

\begin{theorem}[\citet{NoklebyB16}, \citet{KolahiMKB21}]\label{thm:dpi}
    Let
    \begin{equation}\label{eqn:rd.definition}
        R(D) := \inf_{Q_{\widehat{W}|W}} I(W;\widehat{W}),  \quad \mathrm{s.t.} \quad \mathbb{E}_{Z,W, \widehat{W}} [\ell(Z;\widehat{W})] - L^* \leq D
    \end{equation} 
    be the {\em rate-distortion function} of $W$ with respect to loss $\ell(z;w)$, 
    where the infimum is over all random encoders $Q_{\widehat{W}|W}$. Then, there exists a learning rule $Q_{\widehat{W}|Z^n}$ with excess Bayes risk $E_n(Q_{\widehat{W}|Z^n}) \leq D$ only if
    \begin{equation}\label{eqn:dpi}
        I(Z^n; W) \geq R(D).
    \end{equation}
\end{theorem}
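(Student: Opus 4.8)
The plan is to show that any learning rule meeting the excess-risk budget $D$ induces, by marginalizing out the training set, a feasible encoder for the optimization that defines $R(D)$, and then to invoke the data-processing inequality to bound the mutual information of that encoder by $I(Z^n;W)$.

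First I would fix a learning rule $Q_{\widehat{W}|Z^n}$ with $E_n(Q_{\widehat{W}|Z^n}) \le D$ and define the induced parameter-to-estimate channel
\[
    Q_{\widehat{W}|W}(\cdot \mid w) \;:=\; \int Q_{\widehat{W}|Z^n}(\cdot \mid z^n)\, P_{Z^n|W}(\mathrm{d}z^n \mid w),
\]
so that the law of $(W,\widehat{W})$ under the original joint $P_{Z,Z^n,W,\widehat{W}} = Q_{\widehat{W}|Z^n}P_{Z^n|W}P_{Z|W}P_W$ coincides with $P_W \otimes Q_{\widehat{W}|W}$. This is a legitimate random encoder from $W$ to $\widehat{W}$, hence a candidate in the infimum defining $R(D)$.

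Next I would check feasibility. Since the test sample $Z$ is conditionally independent of $(Z^n,\widehat{W})$ given $W$, the distortion constraint in the definition of $R(D)$, evaluated at $Q_{\widehat{W}|W}$, reduces to
\[
    \mathbb{E}_{W,\widehat{W}}\!\left[\, \mathbb{E}_{Z \mid W}\!\left[\ell(Z;\widehat{W}) - \ell(Z;W)\right] \right] \;=\; \mathbb{E}_{W,\widehat{W}}\!\left[D(W \| \widehat{W})\right],
\]
which by definition of the excess Bayes risk equals $E_n(Q_{\widehat{W}|Z^n}) \le D$. Thus $Q_{\widehat{W}|W}$ satisfies the constraint, so $R(D) \le I(W;\widehat{W})$, where the mutual information is taken under $P_W \otimes Q_{\widehat{W}|W}$, equivalently under the original joint.

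Finally I would apply the data-processing inequality. The factorization $P_{Z,Z^n,W,\widehat{W}} = Q_{\widehat{W}|Z^n}P_{Z^n|W}P_{Z|W}P_W$ makes $W \to Z^n \to \widehat{W}$ a Markov chain --- the learning rule sees the data only through $Z^n$ --- so $I(W;\widehat{W}) \le I(W;Z^n) = I(Z^n;W)$. Chaining the two inequalities gives $R(D) \le I(Z^n;W)$, as claimed. I expect the only delicate point to be the distributional bookkeeping: confirming that marginalizing out both $Z$ and $Z^n$ leaves the induced $(W,\widehat{W})$ law as $P_W \otimes Q_{\widehat{W}|W}$, and that the distortion functional appearing in $R(D)$ is exactly the excess Bayes risk of that induced channel. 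Once this is pinned down the argument is a one-line consequence of the DPI, in agreement with the versions stated by \citet{NoklebyB16} and \citet{KolahiMKB21}.
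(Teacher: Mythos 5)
Your proposal is correct and follows the same route as the paper's proof: the Markov chain $W \to Z^n \to \widehat{W}$ plus the data-processing inequality gives $I(W;\widehat{W}) \leq I(Z^n;W)$, and the induced channel $Q_{\widehat{W}|W}$ is a feasible encoder for the rate-distortion optimization, so $R(D) \leq I(W;\widehat{W})$. You merely spell out the distributional bookkeeping (marginalizing out $Z^n$ and using the conditional independence of the test sample) that the paper's one-line "minimizing subject to the constraint" leaves implicit.
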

 
Theorem \ref{thm:dpi} provides a recipe for deriving lower bounds on the excess Bayes risk. If one can bound or derive tight estimates of $R(D)$ and $I(Z^n; W)$, one can lower bound the Bayes risk for sample size $n$ by finding the largest value of $D$ such that $R(D) \leq I(Z^n; W)$. We emphasize that the bounds derived from these methods hold in the {\em average case} over the prior $P_W$ and the training set $P_{Z^n}$. Further, these bounds are algorithm independent, holding for any parametric learning rule $Q_{\widehat{W}|Z^n}$. In the next two sections we will present techniques for bounding $R(D)$ and $I(Z^n;W)$.

We emphasize a distinction between the mutual information $I(Z^n;W)$ considered here and the mutual information between training samples and learning rule output considered in \citet{XuR17,SteinkeZ20}. In the latter case, the mutual information is a measure of sensitivity to the training set, and higher mutual information implies worse generalization guarantees. Here, on the other hand, high mutual information implies that the training set is informative about the {\em true} model parameters and implies good generalization performance.

\section{Bounding $R(D)$}
\label{sec:bounds-on-rate-distortion}
Computing the rate-distortion function in closed-form is challenging beyond classic textbook examples, and is especially difficult for realistic machine learning models. In this section we introduce tools for computing closed-form estimates of $R(D)$.

As mentioned above, we are interested in realizable models and use the 0-1 loss to measure the performance of a classifier. In this setting, model performance is determined entirely by the {\em discrepancy} between the learned classifier and the ground-truth classification rule. To bound $R(D)$, we introduce a notion of {\em sensitivity} of the classification regions with respect to perturbations in $W$; this notion is similar to Lipschitz continuity from below. When $P_{Y|X,W}$ is parameterized such that the posterior is sensitive in this regard, it is possible to bound $R(D)$ using the  Shannon lower bound. We proceed with some necessary definitions before we state the main results.
\begin{definition}[Discrepancy set]
\label{def:error.set}
    For model parameters $w,v \in \mathcal{W}$, let the {\em discrepancy set} $\mathcal{E}(w,v)$ 
    \begin{equation}
        \mathcal{E}(w,v) := \left\{x \in \mathcal{X}: \arg\max_{y} p(y|x, w) \neq \arg\max_y p(y|x, v) \right\},
    \end{equation}
    i.e. the set of points that are classified differently under $v$ than under $w$.
\end{definition}

Then, we define a notion of sensitivity with respect to $\mathcal{E}(w,v)$.
\begin{definition}[$\mu$-sensitivity in expectation]
\label{def:lower.continuous}
We say a model family is {\em $\mu$-sensitive in expectation} if there is a constant $0 < \mu < \infty$  such that for any %
absolutely continuous distribution $P_{V|W}$,  %
    \begin{equation}
        \mathbb{E}_{W,V}[\mathbb{P}_X[X \in \mathcal{E}(W, V)]] \geq \mu~ \mathbb{E}_{W,V}[\|W - V\|_\infty],
    \end{equation}
    where $\mathcal{E}(w,v)$ is the discrepancy set from Definition \ref{def:error.set} and the expectations are computed with respect to $P_{W,V} = P_W P_{V|W}$.
\end{definition}
Roughly speaking, sensitivity in expectation %
indicates that if we draw $V$ in the neighborhood of $W$, the volume of the error set $\mathcal{E}(W,V)$, measured via $P_X$ is bounded by $\mu\|W - V\|_\infty$ on {\em average} over $W$ and $V$. A sufficient condition for sensitivity is for the volume of $\mathcal{E}(w,w+u)$ to indeed be bi-Lipschitz continuous in perturbations $u$ for almost every $w \in \mathcal{W}$. Because the condition holds in expectation, it is further sufficient for the the volume of $\mathcal{E}(w,w+u)$ to be bi-Lipschitz over some region of $\mathcal{W}$ having nonzero probability in $P_W$. Neither of these conditions is strictly necessary.

For overparameterized models, $\mu$-sensitivity in expectation may not hold. In this setting, there may be non-trivial regions of $\mathcal{W}$ that are roughly invariant and have small discrepancy sets. In this case, it is possible to choose $P_{V|W}$ such that $\|V-W\|_\infty$ is large but $\mathcal{E}(W,V)$ is small. As we will show later, it may be possible to reparameterize such models to establish $\mu$-sensitivity in expectation.

We emphasize that some notion of sensitivity is essential in order to bound $R(D)$. Models that are insensitive to perturbations in model weights may be described with arbitrarily few bits.
Next, we derive a bound on the rate-distortion function for 0-1 loss. We bound $R(D)$ we include a bound for non-realizable models for completeness and because it will be useful for considering label noise in Section \ref{sect:margins}.

\begin{theorem}\label{thm:zero-one.rd}
    Let the loss function be the 0-1 loss, and suppose that the model class is $\mu$-sensitive in expectation. Then,
    \begin{equation}
        R(D) \geq \left[h(W) - d_w\log\left(\frac{2 e D}{(1-2L_{\max})\mu}\right)\right]^+,
    \end{equation}
    where $h(W)$ is the differential entropy of $W$, and where
    \begin{equation}
            L_{\max} := \mathbb{E}_{W}\left[ \sup_{x} [1 - \max_y p(y|x,W)]\right] \leq 1 - 1/|\mathcal{Y}|
    \end{equation}
    is the worst-case 0-1 error of the model family when the model parameters are known. Notice that $L_{\max}=0$ for realizable models, and hence if the model class is realizable, we obtain the following bound:
    \begin{equation}
        R(D) \geq \left[h(W) - d_w\log\left(\frac{2 e D}{\mu}\right)\right]^+.
    \end{equation}
\end{theorem}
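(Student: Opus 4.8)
The plan is to prove this as a Shannon-type lower bound on the rate-distortion function of $W$. First I would put the distortion constraint in a usable form. The joint law of $(W,Z,\widehat{W})$ in the definition of $R(D)$ factors as $P_W\,P_{Z|W}\,Q_{\widehat{W}|W}$, and $\mathbb{E}_{Z|W}[\ell(Z;W)]$ does not involve $\widehat{W}$, so the constraint $\mathbb{E}_{Z,W,\widehat{W}}[\ell(Z;\widehat{W})]-L^*\le D$ is exactly $\mathbb{E}_{W,\widehat{W}}[D(W\|\widehat{W})]\le D$. Next I would lower bound the excess loss by the $P_X$-mass of the discrepancy set $\mathcal{E}(w,v)$ of Definition~\ref{def:error.set}. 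For $x\notin\mathcal{E}(w,v)$ the classifiers indexed by $w$ and $v$ agree and contribute nothing; for $x\in\mathcal{E}(w,v)$, writing $a=\arg\max_y p(y|x,w)$ and $b=\arg\max_y p(y|x,v)$, averaging $\ell(z;v)-\ell(z;w)$ over $Y\sim p(\cdot|x,w)$ gives $p(a|x,w)-p(b|x,w)\ge 2\max_y p(y|x,w)-1\ge 1-2L_w$, where $L_w:=\sup_x[1-\max_y p(y|x,w)]$. Hence $D(w\|v)\ge(1-2L_w)\,\mathbb{P}_X[X\in\mathcal{E}(w,v)]$, which for realizable models is the identity $D(w\|v)=\mathbb{P}_X[X\in\mathcal{E}(w,v)]$ since then $\ell(Z;W)=0$ almost surely.

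Second, I would invoke $\mu$-sensitivity in expectation (Definition~\ref{def:lower.continuous}) with $P_{V|W}=Q_{\widehat{W}|W}$. Combining it with the pointwise bound above, the distortion constraint forces $(1-2L_{\max})\,\mu\,\mathbb{E}_{W,\widehat{W}}\|W-\widehat{W}\|_\infty\le\mathbb{E}_{W,\widehat{W}}[D(W\|\widehat{W})]\le D$, so every feasible encoder satisfies $\mathbb{E}\|W-\widehat{W}\|_\infty\le\Delta:=D/\bigl((1-2L_{\max})\mu\bigr)$. Consequently $R(D)$ is bounded below by the rate-distortion function of $W$ under $\ell_\infty$ distortion evaluated at level $\Delta$.

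Third, I would apply the classical Shannon lower bound to this $\ell_\infty$ problem. With $U:=W-\widehat{W}$ we have $I(W;\widehat{W})=h(W)-h(W\mid\widehat{W})=h(W)-h(U\mid\widehat{W})\ge h(W)-h(U)$, so it remains to maximize $h(U)$ over all $U$ with $\mathbb{E}\|U\|_\infty\le\Delta$. Subadditivity of differential entropy gives $h(U)\le\sum_{j=1}^{d_w}h(U_j)$, and since $\mathbb{E}|U_j|\le\mathbb{E}\|U\|_\infty\le\Delta$ for every coordinate, the Laplace maximum-entropy bound gives $h(U_j)\le\log(2e\Delta)$, hence $h(U)\le d_w\log(2e\Delta)$. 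This yields $R(D)\ge h(W)-d_w\log(2e\Delta)$; nonnegativity of mutual information supplies the $[\,\cdot\,]^+$, and the realizable bound is the special case $L_{\max}=0$.

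I expect the third step to be routine; the delicate part is the chain of reductions between the three distortion notions in the first two steps. In the non-realizable case one must pass from $\mathbb{E}[(1-2L_W)\,\mathbb{P}_X[X\in\mathcal{E}(W,V)]]$ to $(1-2L_{\max})\,\mathbb{E}[\mathbb{P}_X[X\in\mathcal{E}(W,V)]]$, which is immediate when $L_w\le L_{\max}$ uniformly---the only regime in which the stated bound is non-vacuous---but otherwise requires an extra argument; and applying $\mu$-sensitivity presumes the near-optimal encoder $Q_{\widehat{W}|W}$ is absolutely continuous, a restriction that for degenerate encoders is removed by a standard dithering/approximation argument perturbing both distortion and rate negligibly.
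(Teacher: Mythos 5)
Your proposal follows essentially the same route as the paper's proof: lower-bound the excess loss by $(1-2L(w))\,\mathbb{P}_X[X\in\mathcal{E}(w,v)]$ (the paper's Lemma~\ref{lem:zero.one.loss}, which you re-derive inline), relax the distortion constraint via $\mu$-sensitivity to a bound on $\mathbb{E}\|W-\widehat{W}\|_\infty$, and apply the Shannon lower bound with the element-wise Laplace maximum-entropy distribution. Your closing remarks correctly identify two points the paper glosses over---the passage from $\mathbb{E}_W[(1-2L(W))\,\mathbb{P}_X[\cdot]]$ to $(1-2L_{\max})\,\mathbb{E}_W[\mathbb{P}_X[\cdot]]$ when $L_{\max}$ is an expectation rather than a supremum, and the absolute-continuity restriction on $P_{V|W}$ in Definition~\ref{def:lower.continuous}---but these are issues with the paper's argument as well, not defects of your approach relative to it.
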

While the full proof is in the appendix, the main idea is that the 0-1 excess loss is bounded by the probability of the error set (Lemma~\ref{lem:zero.one.loss}), which is in turn bounded by $\|w - v\|_\infty$ via the sensitivity assumption. Maximizing the entropy over the expected $L_\infty$ norm leads to an element-wise Laplace distribution for the encoder $Q_{\widehat{W} | W}$.

\subsection{Example: Intervals on $[0,1]$}\label{sect:intervals}
We illustrate the preceding bound with a simple one-dimensional example. Consider a binary classification problem in which $[0,1]$ is divided into two intervals at boundary point $w$:
 \begin{equation}
    p(y=1| x,w) = \mathbb{I}[x \geq w],
\end{equation}  
for $\mathcal{Y} = \{0,1\}$ and $\mathcal{X} = \mathcal{W} = [0,1]$. Further suppose that $P_X$ and $P_W$ are uniform and independent of each other. The discrepancy set is $\mathcal{E}(w,v) = [w,v) \cup [v,w)$, and it is straightforward to see that $\mathbb{P}_X[X \in \mathcal{E}(w,v)] = |w-v|$. Therefore, the model family is sensitive in expectation with $\mu=1$ and $p=1$, and in fact the inequality in Definition \ref{def:lower.continuous} holds with equality pointwise rather than in expectation. Further, $h(W) = 0$ because $W$ is uniform over $[0,1]$, and the rate-distortion function is bounded as
\begin{equation}
    R(D) \geq \left[\log\left(\frac{1}{2eD}  \right)\right]^+ .
\end{equation}

\subsection{Example: Half-spaces}\label{sect:half-spaces}
We next consider a problem in higher dimensions, in which case the parameterization of the problem is crucial to obtaining a bound on the rate-distortion function. Consider the family of half-space classifiers in $\mathbb{R}^{d}$, where a hyperplane\footnote{We suppose the hyperplanes pass through the origin for simplicity.} separates the classes:
\begin{equation}
    p(y=1| x,w) = \mathbb{I}[w^Tx \geq 0].
\end{equation}
Again $\mathcal{Y}= \{0,1\}$, but here $\mathcal{X}=\mathcal{W} = \mathbb{R}^{d}$. Further, $P_X$ and $P_W$ are taken to be independent and isotropic, i.e. invariant to rotations. It is straightforward to verify that the discrepancy set for this model is
\begin{equation}
    \mathcal{E}(w,v) = \{x \in \mathcal{X} : \mathrm{sgn}(w^Tx) \neq \mathrm{sgn}(v^Tx) \},
\end{equation}
i.e. the points for which the sign of the inner products differ, and its volume w.r.t $P_X$ is
\begin{equation}
    \mathbb{P}[X \in \mathcal{E}(w,v)] = \frac{1}{\pi}\cos^{-1}\left(\frac{w^Tv}{\|w\|\|v\|}\right).
\end{equation}
However, this model is {\em not} sensitive in expectation. Because the classification only depends on the direction of $x$ and $w$, it is possible to choose $P_{V|W}$ to ``trace out'' a region of $\mathcal{W}$ where the norm of $V$ and $W$ are different, but their angles are the same. This choice has a discrepancy set $\mathcal{E}(w,v)$ that remains the same size, even though $\| W - V\|_\infty$ gets arbitrarily large in expectation.

However, we can reparameterize this model family in such a way that sensitivity in expectation holds. Let $\widetilde{\mathcal{W}} = [-1,1]^{d-1}$ parameterize the set of unit-norm vectors in $\mathbb{R}^d$ via the mapping $f(w) = (w_1, \dots w_{d-1}, 1-\sqrt{\sum_{i=1}^{d-1} w_i^2})$. Then, the size of the discrepancy set is just
\begin{equation}
    \mathbb{P}[X \in \mathcal{E}(w,v)] = \frac{1}{\pi}\cos^{-1}\left(w^Tv)\right),
\end{equation}
 with the final elements of $w$ and $v$ computed via the mapping $f$. Because we have normalized $w$ and $v$, there is no invariant region of $\widetilde{\mathcal{W}}$ that we can trace out and we get $\mu$-sensitivity in expectation. 
\begin{lemma}
The reparameterized model class satisfies $\mu$-sensitivity in expectation with $\mu =\pi^{-1}.$ That is
\begin{equation}
     \mathbb{P}[X \in \mathcal{E}(w,v)] \geq \frac{1}{\pi} \| w-v \|_\infty.
\end{equation}
\label{lemma:half-space-discrepancy}
\end{lemma}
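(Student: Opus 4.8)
The plan is to prove the stated inequality \emph{pointwise} in the parameters $w$ and $v$, and then obtain $\mu$-sensitivity in expectation for free by taking $\mathbb{E}_{W,V}$ of both sides (which is legitimate for \emph{any} $P_{V|W}$, not only absolutely continuous ones, so the conclusion is slightly stronger than Definition~\ref{def:lower.continuous} requires). Since the excerpt already records that, in the reparameterized family, $\mathbb{P}[X\in\mathcal{E}(w,v)] = \frac{1}{\pi}\cos^{-1}(f(w)^T f(v))$ with $f(w),f(v)$ unit vectors in $\mathbb{R}^{d}$, the task reduces to showing
\[
    \cos^{-1}\!\left(f(w)^T f(v)\right) \;\geq\; \|w-v\|_\infty
\]
for all $w,v$ in the domain of $f$. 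By Cauchy--Schwarz, $f(w)^T f(v)\in[-1,1]$, so the left-hand side is a well-defined angle $\theta:=\cos^{-1}(f(w)^T f(v))\in[0,\pi]$ --- namely the geodesic (arc-length) distance between $f(w)$ and $f(v)$ on the unit sphere.

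The first step is the standard comparison of geodesic and chordal distance on the sphere. For unit vectors $a,b$ at angle $\theta$ one has $\|a-b\|_2^2 = 2-2\cos\theta = 4\sin^2(\theta/2)$, hence $\|a-b\|_2 = 2\sin(\theta/2)$; and the function $\theta\mapsto \theta - 2\sin(\theta/2)$ vanishes at $\theta=0$ and has derivative $1-\cos(\theta/2)\geq 0$ on $[0,\pi]$, so $\theta\geq 2\sin(\theta/2)=\|a-b\|_2$. Applying this with $a=f(w)$ and $b=f(v)$ gives $\cos^{-1}(f(w)^T f(v)) \geq \|f(w)-f(v)\|_2$.

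The second step compares $\|f(w)-f(v)\|_2$ with $\|w-v\|_\infty$. By construction the first $d-1$ coordinates of $f(w)$ are exactly $w$ (likewise for $v$), so discarding the last coordinate can only shrink the Euclidean norm: $\|f(w)-f(v)\|_2 \geq \|w-v\|_2 \geq \|w-v\|_\infty$, the last step being the usual domination of $\ell_\infty$ by $\ell_2$. Chaining the two steps yields $\mathbb{P}[X\in\mathcal{E}(w,v)] = \frac{1}{\pi}\cos^{-1}(f(w)^T f(v)) \geq \frac{1}{\pi}\|w-v\|_\infty$ for every $w,v$; taking expectations then gives Definition~\ref{def:lower.continuous} with $\mu=\pi^{-1}$.

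I do not expect a real obstacle here --- the argument is elementary spherical geometry. The only point needing care is bookkeeping around the reparameterization map $f$: one should check that the last-coordinate formula genuinely returns a unit vector (so that the $\cos^{-1}$ expression for the discrepancy probability applies) and restrict $w,v$ to the subset of $[-1,1]^{d-1}$ on which $f$ is defined. Neither concern affects the two inequalities above, which hold verbatim.
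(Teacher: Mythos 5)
Your proof is correct and follows essentially the same route as the paper's: your geodesic-versus-chordal bound $\theta \geq 2\sin(\theta/2)$ is exactly the paper's inequality $\cos^{-1}\bigl(1-\tfrac{1}{2}x^2\bigr) \geq x$ combined with the law of cosines for unit vectors, followed by the same $\ell_2 \geq \ell_\infty$ domination. Your extra step of explicitly dropping the last coordinate of $f(w)-f(v)$ is a small bookkeeping refinement the paper glosses over (it applies the norms directly to the unit vectors), but it does not change the argument.
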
 
Hence, the rate-distortion function is bounded as 
\begin{equation}
    R(D) \geq \left[(d-1)\log\left(\frac{1}{4\pi eD}\right)\right]^+.
\end{equation}
In this case, the natural parameterization does not capture the number of free parameters in the model family, and it is required to describe only $(d-1)$ parameters with sufficient fidelity to get a high-accuracy approximation of the model with respect to the 0-1 loss.

\section{Bounding $I(Z^n;W)$ for Realizable Models}
\label{sec:bounds-on-MI}

Here, we present new techniques for bounding the mutual information for realizable models. To do so, we need a notion of consistency of model parameters with respect to the training set.
\begin{definition}[Consistency set]
\label{def:consistency}
    For fixed training set $z^n \in \mathcal{Y}^n \times \mathcal{X}^n$, let the {\em consistency set} $\mathcal{C}(z^n)$ be the set of all models consistent with the training set:
    \begin{equation}
        \mathcal{C}(z^n) := \{w \in \mathcal{W}: \forall i\in [n] \quad f(x_i,w) \in \mathcal{F}_{y_i}\}.
    \end{equation}
\end{definition}

\begin{lemma}\label{lem:realizable.mi}
    Let $P_{ZW} = P_{Y|XW}P_{X|W}P_W$ be a realizable model family. Suppose further the special case $P_{X|W} = P_W$; i.e. the model parameters specify the classifier of $Y$ given $X$, but not the marginal distribution on data points. Then, $I(Z^n; W) = H(Y^n | X^n)$, which further can be expressed as
    \begin{equation}
        I(Z^n; W) = -\mathbb{E}_{Z^n} [ \log (\mathbb{P}_{V \sim P_W}[V \in \mathcal{C}(Z^n)]) ],
    \end{equation}
    where $\mathcal{C}(z^n)$ is the consistency set from Definition \ref{def:consistency}.
\end{lemma}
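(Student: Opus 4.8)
The plan is to peel off the contribution of $X^n$ via the chain rule for mutual information, use realizability to kill one conditional entropy term, and then marginalize over the prior $P_W$ to recognize the consistency set.

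\textbf{Step 1 (reduce to labels).} Since the conditional $P_{X|W}$ does not depend on $w$ and the $n$ samples are i.i.d.\ given $W$, the marginal law $P_{X^n}$ does not depend on $W$ either, so $I(X^n;W)=0$. Writing $Z^n=(X^n,Y^n)$ and applying the chain rule of mutual information,
\begin{equation}
    I(Z^n;W) = I(X^n;W) + I(Y^n;W\mid X^n) = I(Y^n;W\mid X^n) = H(Y^n\mid X^n) - H(Y^n\mid X^n,W).
\end{equation}
Because the model is realizable, $p(y\mid x,w)\in\{0,1\}$ for every $(x,w)$, so conditioned on $(X^n,W)$ the label vector $Y^n$ is deterministic and $H(Y^n\mid X^n,W)=0$ (the labels are discrete, so this is an honest Shannon entropy regardless of whether $X$ is continuous). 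This gives the first assertion, $I(Z^n;W)=H(Y^n\mid X^n)$.

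\textbf{Step 2 (identify the consistency set).} I would then expand $H(Y^n\mid X^n)$ and compute the conditional pmf of $Y^n$ by marginalizing over $W$. For a fixed $x^n$, using again that $W\perp X^n$ so that the conditional law of $W$ given $X^n=x^n$ is still $P_W$,
\begin{equation}
    p(y^n\mid x^n) = \mathbb{E}_{V\sim P_W}\!\left[\prod_{i=1}^n p(y_i\mid x_i,V)\right] = \mathbb{E}_{V\sim P_W}\!\left[\prod_{i=1}^n \mathbb{I}[f(x_i,V)\in\mathcal{F}_{y_i}]\right] = \mathbb{P}_{V\sim P_W}[V\in\mathcal{C}(z^n)],
\end{equation}
where the middle equality uses that for realizable models $p(y_i\mid x_i,v)=\mathbb{I}[f(x_i,v)\in\mathcal{F}_{y_i}]$, and the last equality is Definition~\ref{def:consistency} applied to $z^n=(x^n,y^n)$. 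Substituting into $H(Y^n\mid X^n)=-\mathbb{E}_{X^n}\mathbb{E}_{Y^n\mid X^n}[\log p(Y^n\mid X^n)]=-\mathbb{E}_{Z^n}[\log p(Y^n\mid X^n)]$ yields
\begin{equation}
    I(Z^n;W) = H(Y^n\mid X^n) = -\mathbb{E}_{Z^n}\big[\log \mathbb{P}_{V\sim P_W}[V\in\mathcal{C}(Z^n)]\big],
\end{equation}
which is the stated formula.

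\textbf{Main obstacle.} I do not expect a serious obstacle; the argument is essentially bookkeeping with mutual information identities. The only points requiring care are (i) justifying $I(X^n;W)=0$, which is immediate from $P_{X|W}$ being $w$-free together with conditional independence of the samples, (ii) ensuring the marginalization that produces $p(y^n\mid x^n)$ correctly uses $W\perp X^n$, and (iii) noting that $\mathcal{C}(Z^n)$ has positive $P_W$-probability almost surely (it contains the true $W$ whenever $Z^n$ is generated by the model), so the logarithm is finite and the expectation is well defined.
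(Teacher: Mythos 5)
Your proof is correct and follows essentially the same route as the paper's: the chain rule with $I(X^n;W)=0$ and $H(Y^n\mid X^n,W)=0$ from realizability for the first identity, then marginalizing $p(y^n\mid x^n)$ over $P_W$ to recognize the consistency set. The extra care you take (discreteness of the labels, positivity of $\mathbb{P}_W[W\in\mathcal{C}(Z^n)]$) is a modest refinement of the paper's terser argument but not a different approach.
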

In other words, if the (log) volume of $\mathcal{C}(Z^n)$ is small on average, the training set eliminates all but a small set of possible model parameters $W$ and provides high information. The full proof is in the appendix. In some cases it is possible to bound $I(Z^n;W)$ by direct analysis of $\mathbb{P}_{V \sim P_W}[V \in \mathcal{C}(Z^n)]$. For VC classes, however, we present a simple outer bound entirely in terms of $n$ and the VC dimension.
\begin{lemma}\label{lem:vc.mi}
    Let $P_{ZW} = P_{Y|XW}P_{X}P_W$ be a realizable model family such that $p(y|x,w)$ induces a classification rule with finite VC dimension $\dvc$. Then,
    \begin{equation}
        I(Z^n; W) \leq \dvc \log(e \cdot n),
    \end{equation}
    which for $n \geq \dvc$ can be sharpened to
    \begin{equation}
        I(Z^n; W) \leq \dvc \log\left(\frac{e \cdot n}{\dvc}\right).
    \end{equation}
\end{lemma}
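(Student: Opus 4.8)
The plan is to reduce $I(Z^n;W)$ to a conditional entropy, bound that entropy by the logarithm of the number of labelings of the sample that the concept class can realize, and then apply the Sauer--Shelah lemma together with a standard estimate of the growth function.

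\textbf{Step 1: reduce to $H(Y^n\mid X^n)$.} Under the present hypotheses we have $X^n\perp W$ (since $P_{ZW}=P_{Y\mid XW}P_X P_W$) and the model is realizable, which is exactly the situation covered by the proof of Lemma~\ref{lem:realizable.mi}; I would reproduce that short chain here. Writing $Z^n=(X^n,Y^n)$ and using $I(X^n;W)=0$, the chain rule gives $I(Z^n;W)=I(X^n;W)+I(Y^n;W\mid X^n)=I(Y^n;W\mid X^n)=H(Y^n\mid X^n)-H(Y^n\mid X^n,W)$, and $H(Y^n\mid X^n,W)=0$ because in a realizable model each $Y_i$ is the deterministic label assigned to $X_i$ by the classifier $x\mapsto\arg\max_y p(y\mid x,W)$. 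Hence $I(Z^n;W)=H(Y^n\mid X^n)=\mathbb{E}_{X^n}\!\left[H(Y^n\mid X^n=x^n)\right]$.

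\textbf{Step 2: count realizable labelings.} Fix $x^n=(x_1,\dots,x_n)$. Conditioned on $X^n=x^n$, the random labeling $Y^n$ takes values only among those $y^n\in\mathcal{Y}^n$ that arise with $y_i=\arg\max_y p(y\mid x_i,w)$ for some $w$ in the support of $P_W$; in particular its support is contained in the set of dichotomies that the concept class $\{x\mapsto\arg\max_y p(y\mid x,w):w\in\mathcal{W}\}$ induces on $\{x_1,\dots,x_n\}$. Since any random variable supported on at most $k$ values has entropy at most $\log k$, and since by the Sauer--Shelah lemma a class of VC dimension $\dvc$ induces at most $\Phi(n):=\sum_{i=0}^{\dvc}\binom{n}{i}$ dichotomies on $n$ points, we obtain $H(Y^n\mid X^n=x^n)\le\log\Phi(n)$ uniformly in $x^n$, and therefore $I(Z^n;W)\le\log\Phi(n)$.

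\textbf{Step 3: estimate $\Phi(n)$.} For $n\ge\dvc\ge1$ the standard bound $\Phi(n)\le(en/\dvc)^{\dvc}$ yields the sharpened claim $I(Z^n;W)\le\dvc\log(en/\dvc)$. For the form valid for all $n$, I would note $\Phi(n)\le(en)^{\dvc}$ in every case: this follows from $(en/\dvc)^{\dvc}\le(en)^{\dvc}$ when $n\ge\dvc\ge1$; from $\Phi(n)=2^n\le 2^{\dvc}\le(en)^{\dvc}$ when $1\le n<\dvc$ (using $e\ge 2$); and from $\Phi(n)=1$ when $\dvc=0$. Hence $I(Z^n;W)\le\dvc\log(en)$. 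The argument is essentially routine, so rather than a genuine obstacle there are two points that need care. First, one must confirm that the identity $I(Z^n;W)=H(Y^n\mid X^n)$ transfers from the setting of Lemma~\ref{lem:realizable.mi} (stated there with the marginal of $X$ independent of $W$) to the present one; the derivation in Step~1 handles this. Second is the support-counting step: restricting to $w$ in the support of $P_W$, or indeed to $w$ in any consistency set (Definition~\ref{def:consistency}), can only shrink the set of realizable labelings, so the Sauer--Shelah bound is never violated, and the relevant maps are measurable so the expectation over $X^n$ is well defined. The combinatorial estimate of the growth function in Step~3 is completely standard.
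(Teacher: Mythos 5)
Your proof is correct and follows essentially the same route as the paper's: reduce $I(Z^n;W)$ to $H(Y^n\mid X^n)$ via Lemma~\ref{lem:realizable.mi}, bound the support of $Y^n$ given $X^n=x^n$ by the growth function via Sauer's lemma, and then use the standard estimates of $\sum_{k=0}^{\dvc}\binom{n}{k}$. The only (immaterial) difference is the crude-regime constant: the paper uses $\Phi(n)\le e\cdot n^{\dvc}$ for $n\ge 1$ where you use $\Phi(n)\le (en)^{\dvc}$; both imply the stated $\dvc\log(e\cdot n)$.
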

\begin{proof}
    By Lemma \ref{lem:realizable.mi}, $I(Z^n; W) = H(Y^n| X^n)$. For a model with finite VC dimension, Sauer's lemma \citep{sauer1972density} bounds the maximum number of labelings of $X^n$:
    \begin{equation}
        |\{y^n \in \{0,1\}^n : p(y^n|x^n) \neq 0 \}| \leq \sum_{k=0}^{\dvc} \binom{n}{k},
    \end{equation}
    which is bounded by $e \cdot n^{\dvc}$ for $n \geq 1$ and $(en/\dvc)^{\dvc}$ for $n \geq \dvc$. Further, the conditional entropy is maximized by a uniform distribution over all valid $Y^n$, which yields the result.
\end{proof}

In the case of a realizable model that is both a VC class and is $\mu$-sensitive in expectation, we obtain a lower bound on the 0-1 excess risk.
\begin{corollary}\label{cor:vc.risk}
    Let $P_{ZW} = P_{Y|XW}P_{X}P_W$ be a realizable model family such that $p(y|x,w)$ induces a classification rule with finite VC dimension $\dvc$, and suppose $P_{ZW}$ is $\mu$-sensitive in expectation per Definition \ref{def:lower.continuous}. Then, the 0-1 excess Bayes risk for any parametric learning rule is bounded as
    \begin{equation}
        E_n \geq \left(\frac{d}{e \cdot n}\right)^{\frac{\dvc}{d_w}} \cdot \left(\frac{\mu}{2e}\right)\exp(h(W)/d_w),
    \end{equation}
    for $n \geq \dvc$. In particular, when $d_w = \dvc$, the bound is $\Omega(\dvc/n)$.
\end{corollary}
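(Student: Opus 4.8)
The plan is to chain the three preceding results—Theorem~\ref{thm:dpi}, the realizable case of Theorem~\ref{thm:zero-one.rd}, and Lemma~\ref{lem:vc.mi}—and then solve the resulting inequality for $E_n$. First I would apply Theorem~\ref{thm:dpi}: any parametric learning rule $Q_{\widehat{W}|Z^n}$ whose excess Bayes risk equals $E_n$ in particular has excess Bayes risk $\leq E_n$, so $I(Z^n;W)\geq R(E_n)$. Next I would lower-bound $R(E_n)$ using Theorem~\ref{thm:zero-one.rd}: since the family is realizable we have $L_{\max}=0$, so
\[
R(E_n)\;\geq\;\Big[\,h(W)-d_w\log\!\big(\tfrac{2eE_n}{\mu}\big)\Big]^{+}\;\geq\;h(W)-d_w\log\!\big(\tfrac{2eE_n}{\mu}\big),
\]
where I drop the positive part because it only strengthens this particular inequality. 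Finally I would upper-bound the mutual information via Lemma~\ref{lem:vc.mi}, which for $n\geq\dvc$ gives $I(Z^n;W)\leq \dvc\log(en/\dvc)$.

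Combining these three facts yields $\dvc\log(en/\dvc)\geq h(W)-d_w\log(2eE_n/\mu)$. Since $\log$ is increasing, rearranging is immediate and direction-preserving: $\log(2eE_n/\mu)\geq \tfrac{1}{d_w}\big(h(W)-\dvc\log(en/\dvc)\big)$, and exponentiating gives
\[
E_n\;\geq\;\frac{\mu}{2e}\,\exp\!\Big(\frac{h(W)}{d_w}\Big)\Big(\frac{\dvc}{en}\Big)^{\dvc/d_w},
\]
which is the claimed bound. Specializing to $d_w=\dvc$ collapses the exponent to $1$, giving $E_n\geq \tfrac{\mu\,\dvc}{2e^2 n}\exp(h(W)/d_w)=\Omega(\dvc/n)$.

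There is no serious obstacle here—it is a corollary—so the main thing to be careful about is bookkeeping. The two structural hypotheses must enter in the right places: $\mu$-sensitivity in expectation is what licenses the Shannon lower bound on $R(D)$ in Theorem~\ref{thm:zero-one.rd}, while the finite VC dimension is what controls $I(Z^n;W)$ from above in Lemma~\ref{lem:vc.mi}; the whole argument is the pincer $R(E_n)\leq I(Z^n;W)$ between these two estimates. The one subtlety worth a sentence in the write-up is the suppressed positive part together with the degenerate regime: if $h(W)-d_w\log(2eE_n/\mu)<0$, i.e.\ $E_n>\tfrac{\mu}{2e}\exp(h(W)/d_w)$, the chained inequality is vacuous, but in that case the claimed bound holds trivially since $n\geq\dvc$ forces $(\dvc/(en))^{\dvc/d_w}\leq 1$; otherwise the rearrangement above applies verbatim. (As derived, the constant in the numerator of the rate term is $\dvc$, matching the stated bound.)
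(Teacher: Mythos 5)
Your proof is correct and is exactly the argument the paper intends: chain Theorem~\ref{thm:dpi}, the realizable case of Theorem~\ref{thm:zero-one.rd}, and Lemma~\ref{lem:vc.mi}, then solve for $E_n$ (and you are right that the $d$ in the statement should be read as $\dvc$). The positive-part caveat you flag is handled correctly, though it is not even needed, since dropping $[\cdot]^+$ only weakens the lower bound on $R(D)$ and the rearrangement is valid for any $E_n>0$.
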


In other words, when the VC dimension matches the number of free parameters in the model family, we obtain a $\Omega(\dvc/n)$ bound on the excess risk. This matches (up to possible log factors) the upper bounds of \cite{tsybakov2004optimal,SteinkeZ20}, which are known to be minimax tight under mild conditions. This result shows that, under these conditions, they are also tight in the average case over the model family. Further, as we show in Section \ref{sect:mer}, this bound matches an {\em outer bound} on the minimum excess risk for realizable models.

Somewhat paradoxically, a VC dimension that overestimates the model complexity results in a lower bound on the Bayes risk that is too {\em optimistic}, whereas in classical VC theory it results in a pessimistic upper bound. This is because our estimate of the mutual information $I(Z^n;W)$ is increasing in the VC dimension: a higher VC dimension means more bits per sample learned about the true model parameters $W$.

More generally, let the {\em mutual information dimension} be defined as 
\begin{equation}
    d_I := \lim_{n \to \infty} \frac{I(Z^n; W)}{\log n}.
\end{equation}
By Lemma~\ref{lem:vc.mi}, $d_I \leq d_{\text{vc}}$. This bound is not always tight, as we shall see in the example in Section~\ref{ex:MI-half-space}.
It is straightforward to adapt Corollary \ref{cor:vc.risk} to see that for mutual information dimension $d_I$ the excess risk scales as
 $\Omega\left(\left(\frac{d_w}{n}\right)^{\frac{d_I}{d_w}}\right).$ It seems that the mutual information dimension is a fundamental quantity in understanding the  Bayes risk.
 
Indeed, in Appendix~\ref{app:smooth.models} we observe that for (non-realizable) smooth model families (where $P_{Y|XW}$ is a smooth function), the mutual information dimension is $d_I =\frac{d_w}{2}$. This results in Bayes risk lower bounds of the form $\Omega\left(\sqrt{\frac{d_w}{n}}\right)$, as opposed to the fast $\Omega\left(\frac{d_w}{n}\right)$ bounds that we obtain for realizable models.

\subsection{Example: Intervals on $[0,1]$}
We revisit the simple setting of Section \ref{sect:intervals}, in which the classifier divides $[0,1]$ into two intervals. This classifier family shatters only one point, so $\dvc=1$ and $I(Z^n;W) \leq \log(e \cdot n)$. The resulting risk bound is $E_n \geq n/2$. By contrast, Theorems 1.2 and 1.6 of \cite{SteinkeZ20} give an outer bound on the expected generalization error for VC classes, which corresponds to the upper bound $E_n \leq (3\log n + 6)/n$. These bounds match up to constants and a $\log n$ term as expected.

\subsection{Example: Half-spaces}
\label{ex:MI-half-space}
Next we revisit the family of half-space classifiers considered in Section \ref{sect:half-spaces}. It is straightforward to verify that half-spaces, constrained to have the origin on the boundary plane, have VC dimension $\dvc = d$. As a result, $I(Z^n;W) \leq d \log(e\cdot n/d)$, giving a bound on the 0-1 risk that scales as
\begin{equation}
    E_n = \Omega\left(\frac{d}{n} \right)^{\frac{d}{d-1}}.
\end{equation}
By contrast, the VC results of \cite{SteinkeZ20} give the upper bound
\begin{equation}
\label{eq:steinke-halfspace}
    E_n \leq \frac{3d \log(n) + 6}{n}.
\end{equation}
For large $d$, the bounds are approximately tight up to a log term. Otherwise, however, the bounds have a substantial gap. Indeed, for $d=2$ the lower bound predicts $E_n = \Omega(1/n^2)$, which is much faster than standard results suggest.

The discrepancy is due to the fact that the bound on $R(D)$ is proportional to $(d-1)$, whereas the bound on $I(Z^n;W)$ scales as $d$. (At least) three explanations are possible: (1) The slack compared to the frequentist bounds is an artifact of the information-theoretic bounding technique based on rate-distortion analysis, (2) the bound on $R(D)$ is loose, and (3) the bound on $I(Z^n;W)$ is loose. We conjecture that the third explanation is correct, and that the mutual information dimension is equal to $(d-1)$ for this problem. We verify this conjecture for $d=2$.

To see this, consider the following parameterization of the family of halfspaces in $\mathbb{R}^2$:
\begin{equation}
    p(y=1 | x, w) = \mathbb{I}\left[\angle x \geq w \right],
\end{equation}
where $\angle x$ is the normalized (to $[0,1)$) angle of $x$ off of the point $(1,0)$, i.e. a point $X$ is classified with $Y=1$ if its normalized angle is greater than $w$. We take $W$ to be uniform over $[0,1)$ and continue to take $P_X$ to be isotropic. 

We bound $I(Z^n;W)$ by direct analysis of the consistency set $C(z^n) \subset \mathcal{W}$. The set of model parameters $w$ consistent with a fixed training set $z^n$ is the interval between classes in the training set:
\begin{equation}
    \mathcal{C}(z^n) = [\theta_-, \theta_+ ),
\end{equation}
where $\theta_-$ is maximum angle of the sample $X_i$ with $Y_i=1$, and $\theta_+$ is the minimum angle of the sample $X_i$ with $Y_i=0$. Then, for simplicity we bound the probability of $\mathcal{C}(z^n)$ in terms of the {\em smallest} angle:
\begin{equation}
    \mathbb{P}_W[W \in \mathcal{C}(Z^n)] \geq \mathbb{P}[W \leq \theta_*],
\end{equation}
where $\theta_* = \arg\min_i \angle X_i$ is the smallest angle in the training set. The distribution of the minimum of $n$ standard uniform random variables has a beta distribution $\beta(1,n)$. Further, the expected logarithm of the beta distribution is known in closed form. Putting these together with Lemma \ref{lem:realizable.mi}, we obtain
\begin{equation}
    I(Z^n;W) \leq -\mathbb{E}_{Z^n}[\mathbb{P}[W \leq \theta_*]] = \psi(n) + \gamma \leq \log(n) + \gamma,
\end{equation}
where $\psi(x)$ is the digamma function and $\gamma$ is the Euler–Mascheroni constant. This tighter bound for the case $d=2$ results in the lower bound
\begin{equation}
    E_n \geq \frac{1}{4\pi e^{\gamma+1} n},
\end{equation}
which scales as $E_n = \Omega(1/n)$ and agrees with the upper bound by~\citep{SteinkeZ20} in \eqref{eq:steinke-halfspace} up to a log factor. In this case, the VC bound overestimates the complexity of the model family, which results in a too-optimistic lower bound on the 0-1 Bayes risk.

\section{Plug-in Classifiers vs. Minimum Excess Risk}\label{sect:mer}
As mentioned in Section \ref{sect:bayes.learning}, we study the expected risk of {\em plug-in} classifiers $p(y|x,\widehat{w})$, which are not in general optimum. By contrast, \citet{XuR20} and \citet{KolahiMKB21} study the {\em minimum excess risk} over all possible classifiers $\psi: \mathcal{X} \times \mathcal{Z}^n \to \mathcal{Y}$ that map a test point and the training set to a classification. Given a loss function $\ell: \mathcal{Y} \times \mathcal{Y} \to \mathbb{R}$, the minimum excess risk is given by
\begin{equation}
    \mathrm{MER}_n := \inf_\psi \mathbb{E} \ell(Y,\psi(X,Z^n)) - L^*,
\end{equation}
where $L^*$ is again the Bayes risk if $W$ is known. The Bayes-optimum classifier is usually a function of the posterior $p(y | x, z^n)$. \citet{XuR20} give upper bounds on the MER for a variety of loss functions, and \cite{KolahiMKB21} give lower bounds for quadratic loss using rate-distortion methods similar to the ones used in this paper. A natural question is the relationship between the MER and the plug-in excess risk $E_n$ studied in this paper.

First, $\mathrm{MER}_n \leq E_n$. This is true because any plug-in classifier $p(y|x,\widehat{w})$ can be realized by a Bayes classifier $\psi$, but not vice versa. Therefore, the Bayes risk lower bounds that we derive in this paper are {\em not} in general lower bounds on the MER. However, $\mathrm{MER}_n$ and $E_n$ often have the same scaling laws. Indeed, for smooth model families, \cite{XuR20} shows $\mathrm{MER}_n = O(d/n)$ for the log loss and $\mathrm{MER}_n = O(\sqrt{d/n})$ for 0-1 loss. We obtain ``matching'' lower bounds on $E_n$ for models that are $\mu$-sensitive in expectation in Section \ref{sect:smooth.mi.results}. This suggests---but of course does not prove---that the MER and the plug-in excess risk have similar scaling behavior.

Furthermore, we can show a similar correspondence for realizable models under 0-1 loss. \citet{XuR20} give a general $O(\sqrt{d/n})$ bound on MER for 0-1 loss, but they do not consider realizable models or whether fast rates are possible. Here we obtain fast rates for realizable models that are also VC classes.
\begin{theorem}
    Let $P_{ZW}$ be a realizable model family such that $p(y|x,w)$ induces a classification rule with finite VC dimension $\dvc$. Then, for $n \geq \dvc$, the minimum excess risk satisfies
    \begin{equation}
        \mathrm{MER}_n \leq \frac{3 \dvc \log(en/\dvc)}{n}.
    \end{equation}
\end{theorem}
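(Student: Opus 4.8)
The plan is to upper bound $\mathrm{MER}_n$ by exhibiting a \emph{single} admissible classifier and controlling its excess risk. First, under the $0$-$1$ loss and realizability the oracle term vanishes: Definition~\ref{def:realizable} forces $Y=\arg\max_y p(y\mid X,W)$ almost surely, so $L^{*}=0$ and $\mathrm{MER}_n=\inf_\psi \mathbb{P}[\,Y\neq\psi(X,Z^n)\,]$. It therefore suffices to pick one $\psi$ and estimate its error. I would take the plug-in rule of an \emph{interpolating} learner: let $A$ be any fixed measurable map with $A(z^n)=\widehat w\in\mathcal{C}(z^n)$ (the consistency set of Definition~\ref{def:consistency}, which is nonempty by the separability assumption imposed on realizable models), and set $\psi(x,z^n)=\arg\max_y p(y\mid x,A(z^n))$. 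Since this learner interpolates every training set, Lemma~\ref{lem:risk.and.generalization} gives $E_n(Q_{\widehat W\mid Z^n})=L_n(Q_{\widehat W\mid Z^n})=\mathbb{E}_{Z,Z^n,W}[\ell(Z;\widehat W)]$, and $\mathrm{MER}_n\le E_n$ as noted at the start of this section. So the theorem reduces to the statement: the expected $0$-$1$ test error of an interpolating VC$(\dvc)$ learner is at most $3\dvc\log(en/\dvc)/n$ for $n\ge\dvc$.

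This last statement is exactly a generalization bound for interpolating learners on VC classes, which is precisely the regime analyzed by \citet{SteinkeZ20}: their conditional–mutual–information argument (Theorems~1.2 and~1.6, stated for $\dvc=1$ in the examples above) yields a bound of the form $3\dvc\log(en/\dvc)/n$ on the expected generalization gap of an interpolating learner for a concept class of VC dimension $\dvc$. Crucially this holds for each fixed ground-truth $w$, and hence survives the average over the prior $P_W$ implicit in $\mathrm{MER}_n$. Inheriting their constant is the cleanest route to the stated $3$.

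Alternatively, one can give a self-contained derivation, which also ties the bound to the machinery already developed here. Conditioning on $W=w$ reduces the problem to the textbook realizable PAC setting for the hypothesis class $\mathcal{H}$ induced by $\mathcal{W}$. The realizable double-sample tail bound gives, uniformly in $w$, $\mathbb{P}_{Z^n\mid W=w}[\mathrm{err}(A(Z^n))>\epsilon]\le \Pi_{\mathcal{H}}(2n)\,2^{-\epsilon n/2}$, and Sauer's lemma — the same estimate used in the proof of Lemma~\ref{lem:vc.mi} — bounds $\Pi_{\mathcal{H}}(2n)\le(2en/\dvc)^{\dvc}$ for $n\ge\dvc$. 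A layer-cake integral then yields $\mathbb{E}[\mathrm{err}]=\int_0^1\mathbb{P}[\mathrm{err}>\epsilon]\,d\epsilon\le\epsilon_0+\Pi_{\mathcal{H}}(2n)\int_{\epsilon_0}^{\infty}2^{-\epsilon n/2}\,d\epsilon$, with $\epsilon_0$ chosen so the tail equals $1$, giving a bound of order $\dvc\log(en/\dvc)/n$; averaging over $P_W$ is free since every step is uniform in $w$.

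\textbf{Main obstacle.} Two points need care. The first is that the chosen $\psi$ is a legitimate competitor in $\inf_\psi$: this requires $\mathcal{C}(Z^n)\neq\varnothing$ almost surely (which is where the separability/realizability hypothesis enters) and a measurable selection $A$, so that $\psi$ is a genuine measurable map $\mathcal{X}\times\mathcal{Z}^n\to\mathcal{Y}$. The second, and the genuinely fiddly part, is matching the constant $3$ and the argument $en/\dvc$ of the logarithm: the crude double-sample tail (with $\Pi_{\mathcal H}(2n)$ and the $2^{-\epsilon n/2}$ exponent) carries slack, so hitting $3$ exactly either requires the sharper realizable tail bound or, more simply, quoting the CMI-based interpolation bound of \citet{SteinkeZ20} directly. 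Everything else — the vanishing of $L^{*}$, the reduction through Lemma~\ref{lem:risk.and.generalization}, and the layer-cake estimate — is routine.
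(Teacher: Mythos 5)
Your route is genuinely different from the paper's, and the comparison is instructive. The paper never exhibits a concrete classifier: it chains three information-theoretic facts, namely $\mathrm{MER}_n \leq 2L^* + 3\,I(Y;W\mid X,Z^n)$ (Lemma~3 of \citet{KolahiMKB21}), $I(Y;W\mid X,Z^n)\leq \tfrac{1}{n}I(Z^n;W)$ (from the proof of Theorem~2 of \citet{XuR20}), and $I(Z^n;W)\leq \dvc\log(en/\dvc)$ (Lemma~\ref{lem:vc.mi}), with $L^*=0$ by realizability. This is why the stated bound has exactly the form $3\dvc\log(en/\dvc)/n$: the ``$3$'' is inherited from the first inequality and the logarithm is verbatim the mutual-information bound. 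Your approach --- reduce $\mathrm{MER}_n$ to the test error of an interpolating ERM via $\mathrm{MER}_n\le E_n$ and Lemma~\ref{lem:risk.and.generalization}, then invoke a realizable VC generalization bound --- is sound and yields the correct order $O(\dvc\log(n/\dvc)/n)$; it is also more classical and makes the witness classifier explicit (modulo the measurable-selection point you correctly flag). However, it does not literally deliver the stated constant: the \citet{SteinkeZ20} bound, as quoted in this very paper, is $(3\dvc\log n+6)/n$, which does not dominate nor is dominated by $3\dvc\log(en/\dvc)/n$ in general, and your double-sample/layer-cake computation carries its own constants ($\log_2$ conversions, the $\Pi_{\mathcal H}(2n)$ growth function, the $2^{-\epsilon n/2}$ exponent) that do not collapse to exactly $3\dvc\log(en/\dvc)$. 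So treat your argument as proving the theorem up to the precise constant and log argument; to get the exact displayed inequality you need the paper's $I(Y;W\mid X,Z^n)$ decomposition. A secondary remark: the paper's route relies on Lemma~\ref{lem:vc.mi}, which assumes $X\perp W$; your route avoids that assumption, which is a small genuine advantage in generality.
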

\begin{proof}
    First, Lemma 3 of \cite{KolahiMKB21} shows that for the 0-1 loss,
    \begin{equation}
        \mathrm{MER}_n \leq 2L^* + 3I(Y; W | X, Z^n).
    \end{equation}
    Because the model is realizable, $L^* = 0$. Further, the proof of Theorem 2 of \cite{XuR20} shows that $I(Y; W | X, Z^n) \leq \frac{1}{n}I(Z^n;W)$, so
    \begin{equation}
        \mathrm{MER}_n \leq \frac{3I(Z^n;W)}{n}.
    \end{equation}
    Finally, Lemma \ref{lem:vc.mi} shows that $I(Z^n;W) \leq \dvc \log(en/\dvc)$, which establishes the result.
\end{proof}

We conjecture that there are mild regulatory conditions under when $\mathrm{MER}_n$ and $E_n$ agree order-wise. Studying the gap between them is a fruitful area for future work.

\section{Noisy Labels}\label{sect:margins}
Under appropriate conditions, it is possible to achieve ``fast'' $O(1/n)$ rates even if the underlying data distribution is not perfectly realizable, i.e. the labels are noisy, so long as suitable conditions on the data distribution are satisfied, such as the so-called Tsybakov-Mammen margin condition \citep{mammen1995asymptotical,tsybakov2004optimal} and the Bernstein condition \citep{bartlett2006empirical}. These conditions ensure that the distribution is ``nearly'' realizable in a precise sense; see \citep{van2015fast} for a detailed explanation of these types of conditions.

In this section we study the impact of label noise on the excess Bayes risk lower bounds. We consider two label noise scenarios: (1) the underlying $P_{ZW}$ {\em is} realizable, but due to annotation errors the training set contains label noise, and (2) the underlying data distribution $P_{ZW}$ has noisy labels, so noise is present in both the train and test sets. In the first case, we provide an exact expression for the penalty due to label noise, which under mild conditions is bounded by a constant for any $n$ and decays to zero as $n \to \infty$. In the second case, we impose the Tsybakov-Mammen condition and show that the lower bound exhibits a margin-dependet scaling similar to the upper bound of  \citep{tsybakov2004optimal}. 

\subsection{Noisy Training Data, ``Clean'' Test Sets}
Under noisy training labels, we formalize the Bayesian learning problem as follows. Let $P_{Z|W}$, $P_W$ specify the model family (with clean labels) and prior as before, and let $P_{\widetilde{Z}|W} = P_{X,\widetilde{Y} | W}$ specify the data distribution of the model family with noisy labels. We suppose the Markov chain $W \to Y \to \widetilde{Y}$ to rule out the possibility that the label noise ``leaks'' information about $W$ to the learner; otherwise the label noise distribution is arbitrary. Then, a learning rule $Q_{\widehat{W} | \widetilde{Z}^n}$ estimates $W$ from the noisy training set. The excess loss and Bayes risk for the learning rule $Q_{\widehat{W} | \widetilde{Z}^n}$ are computed according to the clean distributions $P_{Z|W}$, $P_W$.

In this case, it is straightforward to particularize Theorem \ref{thm:dpi} and show that a learning rule $Q_{\widehat{W} | \widetilde{Z}^n}$ has excess Bayes risk less than or equal to $D$ only if
\begin{equation}
    R(D) \leq I(\widetilde{Z}^n; W),
\end{equation}
where $R(D)$ is computed as before using $P_{Z|W}$, $P_W$. Notice that $I(\widetilde{Z}^n; W) \leq I(Z^n; W)$ due to the data processing inequality, so $R(D) \leq I(Z^n; W)$ remains a necessary condition on the Bayes risk.

To understand the impact of noisy labels on the Bayes risk, then, we need to examine the difference between the mutual information $I(Z^n;W)$ provided by clean training data and the information $I(\widetilde{Z}^n;W)$ provided by noisy labels. 

In the realizable case, we can specify this gap exactly.
\begin{lemma}\label{lem:noisy.mi}
    Let $P_{Z,W}$ be a binary realizable model with $P_{X|W}=P_X$, and define the ``nearly'' realizable model $P_{\widetilde{Z},W} = P_{X,\widetilde{Y},W}$, where noisy labels $\widetilde{Y} \in \mathcal{Y}$ satisfy the Markov chain $W \to Y \to \widetilde{Y}$. Let $U^n := Y^n \oplus \widetilde{Y}^n$ be the label noise. Then,
    \begin{equation}
        I(Z^n;W) - I(X^n,\widetilde{Y}^n; W) = H(U^n | X^n, \widetilde{Y}^n),
    \end{equation}
\end{lemma}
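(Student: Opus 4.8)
The plan is to reduce both mutual informations to entropies of label sequences and then exploit that, once $\widetilde Y^n$ is fixed, the clean labels $Y^n$ and the noise $U^n$ determine one another. First I would invoke Lemma~\ref{lem:realizable.mi}, which applies since $P_{X|W}=P_X$, to write $I(Z^n;W)=H(Y^n\mid X^n)$ where $Z^n=(X^n,Y^n)$.

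Next I would expand $I(X^n,\widetilde Y^n;W)$ by the chain rule as $I(X^n;W)+I(\widetilde Y^n;W\mid X^n)$. The first term vanishes because $X\perp W$. For the second term write $I(\widetilde Y^n;W\mid X^n)=H(\widetilde Y^n\mid X^n)-H(\widetilde Y^n\mid X^n,W)$, and collapse the last term in two moves: by realizability $Y^n$ is a deterministic function of $(X^n,W)$, so $H(\widetilde Y^n\mid X^n,W)=H(\widetilde Y^n\mid X^n,W,Y^n)$; and the Markov structure (per-sample $W\to Y\to\widetilde Y$ together with $X^n\perp W$ and conditional independence of the samples given $W$) gives $\widetilde Y^n\perp W\mid(X^n,Y^n)$, so this equals $H(\widetilde Y^n\mid X^n,Y^n)$. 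Hence $I(X^n,\widetilde Y^n;W)=H(\widetilde Y^n\mid X^n)-H(\widetilde Y^n\mid X^n,Y^n)$, and
\[
 I(Z^n;W)-I(X^n,\widetilde Y^n;W)=H(Y^n\mid X^n)-H(\widetilde Y^n\mid X^n)+H(\widetilde Y^n\mid X^n,Y^n).
\]

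Finally I would use that, since $U^n=Y^n\oplus\widetilde Y^n$ over the binary alphabet, $Y^n=U^n\oplus\widetilde Y^n$, so $(y^n,\widetilde y^n)\mapsto(y^n\oplus\widetilde y^n,\widetilde y^n)$ is a bijection and therefore $H(Y^n,\widetilde Y^n\mid X^n)=H(U^n,\widetilde Y^n\mid X^n)$. Applying the chain rule on each side yields $H(Y^n\mid X^n)+H(\widetilde Y^n\mid X^n,Y^n)=H(\widetilde Y^n\mid X^n)+H(U^n\mid X^n,\widetilde Y^n)$, so the right-hand side of the displayed identity is exactly $H(U^n\mid X^n,\widetilde Y^n)$, which is the claim. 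The computation is routine chain-rule bookkeeping; the only step that requires genuine care is the Markov-structure argument used to justify $\widetilde Y^n\perp W\mid(X^n,Y^n)$ and, combined with realizability, to collapse $H(\widetilde Y^n\mid X^n,W)$ — everything else is the bijection trick plus entropy chain rules.
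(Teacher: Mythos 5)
Your proof is correct and uses the same essential ingredients as the paper's: the chain rule, realizability to kill the conditional label entropy given $W$, the Markov structure $W\to(X^n,Y^n)\to\widetilde Y^n$, and the bijection $(Y^n,\widetilde Y^n)\leftrightarrow(U^n,\widetilde Y^n)$. The only difference is bookkeeping --- the paper isolates the gap directly as $I(Y^n;W\mid X^n,\widetilde Y^n)=H(Y^n\mid X^n,\widetilde Y^n)$ and then applies the relabeling, whereas you reduce each mutual information to conditional label entropies and recombine via the joint-entropy identity; both routes land in the same place.
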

\begin{proof}
    By the chain rule of mutual information,
    \begin{equation}
        I(X^n, Y^n, \widetilde{Y}^n;W) = I(X^n, \widetilde{Y}^n; W) + I(X^n, Y^n; W| \widetilde{Y}^n).
    \end{equation}
    Further, $I(X^n, Y^n, \widetilde{Y}^n;W) = I(Z^n;W)$ by the Markov chain $W \to (X^n, Y^n) \to \widetilde{Y}^n$, so the quantity of interest is
    \begin{equation}
        I(Z^n;W) - I(X^n,\widetilde{Y}^n; W) = I(X^n, Y^n; W| \widetilde{Y}^n).
    \end{equation}
    Again applying the chain rule of mutual information,
    \begin{equation}
         I(X^n, Y^n; W| \widetilde{Y}^n) = I(X^n; W) +  I(Y^n; W| X^n, \widetilde{Y}^n) = I(Y^n; W| X^n, \widetilde{Y}^n),
    \end{equation}
    where the latter equality is because $X$ and $W$ are independent. Finally,
    \begin{align}
        I(Y^n; W| X^n, \widetilde{Y}^n) &= H(Y^n | X^n, \widetilde{Y}^n) - H(Y^n | W, X^n, \widetilde{Y}^n) \\
        &= H(Y^n | X^n, \widetilde{Y}^n) \\
        &= H(U^n |  X^n, \widetilde{Y}^n),
    \end{align}
    where the second equality is because $P_{ZW}$ is realizable, and the final equality is because $Y^n$ and $U^n$ (given $\widetilde{Y}^n$) have the same distribution  up to a relabeling.
\end{proof}
The information gap is exactly the amount of uncertainty left in the label noise if we have access to both $X^n$ and the noisy labels $\widetilde{Y}^n$. For a simple model family and/or for large $n$, there will be relatively few noiseless labelings $Y^n$ consistent with $X^n$, and it will be possible to detect most of the label errors and $H(U^n | X^n, \widetilde{Y}^n)$ will be small. Otherwise the gap may be large.

Indeed, for a model family that is learnable from $(X^n,\widetilde{Y}^n)$---which includes any VC class under i.i.d. label flips with probability less than $1/2$---one learns $W$ arbitrarily well with enough samples, which implies that $Y^n$ and thus the label noise is known arbitrarily as well. In this case $H(U^n | X^n, \widetilde{Y}^n) \to 0$, and the entropy is bounded above by a constant. This further implies an excess risk lower bound that differs from the noiseless case by a constant that scales as $1 + o_n(1)$.

This result suggests that, for realizable models, even moderate label noise has small fundamental impact on learning rates. An important question for future work is whether tighter {\em outer} bound are possible in the realizable case, as well as precise convergence rates on $H(U^n | X^n, \widetilde{Y}^n)$.

\subsection{Noisy Training and Test Sets}
When both training and test sets are corrupted by label noise, the rate-distortion function and the mutual information in Theorem \ref{thm:dpi} change. Similar to before let $P_{Z,W}$, be a {\em realizable} model family, and let $P_{\widetilde{Z},W}$ specify the distribution of the model family with label noise. Again we suppose the Markov chain $W \to Y \to \widetilde{Y}$, i.e. noise is added to labels $Y$ independent of $W$. Let $Q_{\widehat{W} | \widetilde{Z}^n}$ be a parametric learning rule. Because both train and test sets are drawn according to $P_{\widetilde{Z},W}$, the excess loss and Bayes risk are computed according to the noisy distribution.

A straightforward application of Theorem \ref{thm:dpi} to this scenario shows that we must have $I(\widetilde{Z}^n; W) \geq \widetilde{R}(D)$ for any learning rule with excess risk less than or equal to $D$, where $\widetilde{R}(D)$ is the rate-distortion function computed according to $P_{\widetilde{Z},W}$. Further, by the data-processing inequality, $I(\widetilde{Z}^n; W) \leq I(Z^n; W)$ and we must also have $I(Z^n; W) \geq \widetilde{R}(D)$. In this case, we can get valid bounds on the excess risk by considering only the changes to the rate-distortion function.

We can bound $\widetilde{R}(D)$ when $P_{\widetilde{Z},W}$ satisfies the Tsybakov-Mammen condition, which we state below in a form adapted to our problem setting.
\begin{definition}[Tsybakov-Mammen condition with margin $t$]
    We say that a binary model family $P_{\widetilde{Z},W}$ satisfies the {\em Tsybakov-Mammen} condition with margin $t$ if, for every $x \in \mathcal{X}$ and $w \in \mathcal{W}$, there is an $h > 0$ such that
    \begin{equation}
        |2p(y|x,w) - 1| \geq t.
\end{equation}
\end{definition}
The Tsybakov-Mammen condition enforces that the posterior probability for every point is bounded away from $1/2$; larger $t$ leads to a larger margin. Notice that the Tsybakov-Mammen condition with margin $t=1$ recovers the realizable model family (Definition~\ref{def:realizable}) that we have investigated in this paper.

If $P_{Z,W}$ is also a VC class with VC dimension $\dvc$, \cite{tsybakov2004optimal} shows that when Tsybakov-Mammen condition with margin $t$ is satisfied, the excess risk is upper bounded as
\begin{equation}
    E_n = O\left(\frac{\dvc}{nt}\log\left(\frac{n t^2}{\dvc} \right)\right),
\end{equation}
and that this bound is minimax tight. The conditional mutual information bounds of \citet{SteinkeZ20} and \citet{grunwald2021pac} give similar scaling laws under slightly different conditions, again showing that fast rates are possible under well-behaved label noise. A natural question is whether we can derive similar {\em lower} bounds in this setting.

\begin{theorem}\label{thm:Tsybakov.lower.bound}
    Let $P_{Z,W}$ be a realizable binary model for which $P_{Y|X,W}$ is a VC class with dimension $\dvc$, and let $P_{\widetilde{Z},W} = P_{X,\widetilde{Y},W}$ be an ``almost'' realizable model that satisfies the conditions of Lemma \ref{lem:noisy.mi} and the Tsybakov-Mammen condition for some $t > 0$. Finally, suppose that $P_{\widetilde{Z},W}$ is $\mu$-sensitive in expectation. Then, the excess Bayes risk is bounded below as
    \begin{equation}\label{eqn:tsybakov.lower.bound}
        E_n \geq   \frac{\mu t}{2e}\left( \frac{d}{e \cdot n}\right)^{\frac{\dvc}{d_w}}\exp(h(W)/d_w).
    \end{equation}
\end{theorem}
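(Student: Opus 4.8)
The plan is to re-run the rate-distortion argument behind Corollary~\ref{cor:vc.risk}, but with the noisy test distribution $P_{\widetilde Z,W}$ supplying the distortion measure. Since the learning rule $Q_{\widehat W\mid\widetilde Z^n}$ observes $\widetilde Z^n$ and is scored against $P_{\widetilde Z\mid W}$, Theorem~\ref{thm:dpi} (applied with the noisy distortion) says that any rule with excess Bayes risk $E_n$ must obey $I(\widetilde Z^n;W)\ge\widetilde R(E_n)$, where $\widetilde R$ is the rate-distortion function of $W$ for the $0$-$1$ loss computed under $P_{\widetilde Z,W}$. So it is enough to lower bound $\widetilde R(E_n)$, upper bound $I(\widetilde Z^n;W)$, and solve the resulting inequality for $E_n$.

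For the rate-distortion bound I would apply the non-realizable form of Theorem~\ref{thm:zero-one.rd}, which is available because $P_{\widetilde Z,W}$ is assumed $\mu$-sensitive in expectation. Its only model-dependent ingredient is $L_{\max}=\mathbb{E}_W[\sup_x(1-\max_y p(y\mid x,W))]$, and the Tsybakov-Mammen condition $|2p(y\mid x,w)-1|\ge t$ forces $\max_y p(y\mid x,w)\ge(1+t)/2$ for all $x,w$, hence $L_{\max}\le(1-t)/2$ and $1-2L_{\max}\ge t$. Plugging this into Theorem~\ref{thm:zero-one.rd} gives $\widetilde R(D)\ge[h(W)-d_w\log(2eD/(t\mu))]^+$. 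For the mutual information, the Markov chain $W\to Z^n\to\widetilde Z^n$ (the labels $\widetilde Y^n$ are a $W$-independent corruption of $Y^n$, and $X^n$ is common) together with the data-processing inequality give $I(\widetilde Z^n;W)\le I(Z^n;W)$; since $P_{Z,W}$ is a realizable VC class with $P_{X\mid W}=P_X$, Lemma~\ref{lem:vc.mi} gives $I(Z^n;W)\le\dvc\log(en/\dvc)$ for $n\ge\dvc$. Chaining the three displays yields $[h(W)-d_w\log(2eE_n/(t\mu))]^+\le\dvc\log(en/\dvc)$; dropping the $[\cdot]^+$ via $[x]^+\ge x$ and solving for $E_n$ exactly as in the proof of Corollary~\ref{cor:vc.risk}, with $t\mu$ in place of $\mu$, produces \eqref{eqn:tsybakov.lower.bound}.

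The computation itself is routine; the step that needs care is the well-posedness of the noisy problem, i.e. checking that the noisy excess loss $\widetilde D(w\|v)=\mathbb{E}_{P_{\widetilde Z\mid W=w}}[\ell_{\text{0-1}}(\widetilde Z;v)-\ell_{\text{0-1}}(\widetilde Z;w)]$ is still a quasi-divergence, so that $\widetilde R$ and the excess Bayes risk are defined and Theorem~\ref{thm:dpi} applies. Under the Markov chain $W\to Y\to\widetilde Y$ together with the margin $t>0$, the classifier $\arg\max_y p(y\mid x,w)$ remains Bayes-optimal for $P_{\widetilde Y\mid X,W=w}$ (the label noise is benign in this sense), which gives $\widetilde D(w\|w)=0$ and $\widetilde D(w\|v)\ge0$. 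One should also confirm that the $\mu$-sensitivity hypothesis is genuinely being used for the noisy classification regions, which it is by assumption. Apart from these checks, the argument is a direct composition of Theorems~\ref{thm:dpi} and~\ref{thm:zero-one.rd} with Lemma~\ref{lem:vc.mi} and the estimate $L_{\max}\le(1-t)/2$.
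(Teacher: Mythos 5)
Your proposal is correct and follows essentially the same route as the paper's proof: bound $L_{\max}\leq(1-t)/2$ via the margin condition, plug into the non-realizable form of Theorem~\ref{thm:zero-one.rd} to get $\widetilde{R}(D)\geq[h(W)-d_w\log(2eD/(\mu t))]^+$, and combine with Theorem~\ref{thm:dpi}, the data-processing step $I(\widetilde{Z}^n;W)\leq I(Z^n;W)$, and Lemma~\ref{lem:vc.mi}. Your additional check that the noisy excess loss remains a quasi-divergence under the margin condition is a worthwhile point of care that the paper's terse proof leaves implicit.
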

\begin{proof}
    For a model satisfying the Tsybakov-Mammen condition with margin $t$, we have $L_{\max} \leq \frac{1- t}{2}$. Applying this to Theorem \ref{thm:zero-one.rd}, we obtain
    \begin{equation}
        \widetilde{R}(D) \geq \left[h(W) - d_w\log\left(\frac{2eD}{\mu t}\right)\right]^+.
    \end{equation}
    Applying Theorem  \ref{thm:dpi} and Lemma \ref{lem:vc.mi} gives the result.
\end{proof}

We obtain a lower bound that {\em decreases} with decreasing margin: the lower bound scales linearly in $t$, whereas the Tsybakov upper bound scales in $t^{-1}$. A potential explanation for this difference is that we have used $I(Z^n;W)$ to bound the mutual information provided by the training set. Indeed, $\widetilde{R}(D) \leq R(D)$ and $I(\widetilde{Z}^n; W) \leq I(Z^n;W)$, and perhaps a more careful analysis of the mutual information might result in a different scaling law. However, per Lemma \ref{lem:noisy.mi}, the gap between $I(\widetilde{Z}^n; W)$ and $I(Z^n;W)$ is $o(1)$ in the realizable case, so for large $n$ this does not provide an explanation.

Instead, we point out that smaller margin can result in a problem that is easier with respect to the {\em excess} risk. For example, a realizable model corrupted with i.i.d. random flips at probability $p \approx 1/2$ has margin $t \approx 0$ and oracle Bayes risk of $p \approx 1/2$ . In this case, even a zero-information classifier that outputs random labels achieves Bayes risk $1/2$ and therefore small Bayes risk, which roughly agrees with (\ref{eqn:tsybakov.lower.bound}). The bound on $\widetilde{R}(D)$ derived in the proof of Theorem \ref{thm:Tsybakov.lower.bound} accounts in part for such cases. Future work involves a more detailed analysis of the tightness of this bound.

\section{Conclusion}
\vspace{-.07in}
We have derived fundamental limits on the performance of interpolating classifiers by bounding the excess Bayes risk of realizable model families, showing that models that satisfy a Lipschitz-like continuity in their decision regions have average 0-1 loss decaying as $\Omega(1/n)$. Our bounds also serve as lower bounds on generalization error, and they establish that ``fast'' convergence rates can be tight in the average case.
\bibliography{alt.bib}

\begin{thebibliography}{22}
\providecommand{\natexlab}[1]{#1}
\providecommand{\url}[1]{\texttt{#1}}
\expandafter\ifx\csname urlstyle\endcsname\relax
  \providecommand{\doi}[1]{doi: #1}\else
  \providecommand{\doi}{doi: \begingroup \urlstyle{rm}\Url}\fi

\bibitem[Bartlett and Mendelson(2006)]{bartlett2006empirical}
Peter~L Bartlett and Shahar Mendelson.
\newblock Empirical minimization.
\newblock \emph{Probability theory and related fields}, 135\penalty0
  (3):\penalty0 311--334, 2006.

\bibitem[Bartlett et~al.(2017)Bartlett, Foster, and Telgarsky]{BartlettFT17}
Peter~L Bartlett, Dylan~J Foster, and Matus~J Telgarsky.
\newblock Spectrally-normalized margin bounds for neural networks.
\newblock In \emph{Advances in Neural Information Processing Systems},
  volume~30, 2017.

\bibitem[Bartlett et~al.(2020)Bartlett, Long, Lugosi, and
  Tsigler]{BartlettLL20}
Peter~L. Bartlett, Philip~M. Long, G{\'a}bor Lugosi, and Alexander Tsigler.
\newblock Benign overfitting in linear regression.
\newblock \emph{Proceedings of the National Academy of Sciences}, 117\penalty0
  (48):\penalty0 30063--30070, 2020.
\newblock ISSN 0027-8424.
\newblock \doi{10.1073/pnas.1907378117}.
\newblock URL \url{https://www.pnas.org/content/117/48/30063}.

\bibitem[Clarke and Barron(1990)]{ClarkeB90}
B.S. Clarke and A.R. Barron.
\newblock Information-theoretic asymptotics of {B}ayes methods.
\newblock \emph{IEEE Transactions on Information Theory}, 36\penalty0
  (3):\penalty0 453--471, 1990.
\newblock \doi{10.1109/18.54897}.

\bibitem[Cover and Thomas(2006)]{CoverT06}
Thomas~M. Cover and Joy~A. Thomas.
\newblock \emph{Elements of Information Theory (Wiley Series in
  Telecommunications and Signal Processing)}.
\newblock Wiley-Interscience, USA, 2006.
\newblock ISBN 0471241954.

\bibitem[Dar et~al.(2021)Dar, Muthukumar, and Baraniuk]{dar2021farewell}
Yehuda Dar, Vidya Muthukumar, and Richard~G Baraniuk.
\newblock A farewell to the bias-variance tradeoff? an overview of the theory
  of overparameterized machine learning.
\newblock \emph{arXiv preprint arXiv:2109.02355}, 2021.

\bibitem[Dziugaite et~al.(2021)Dziugaite, Hsu, Gharbieh, Arpino, and
  Roy]{DziugaiteHGA21}
Gintare~Karolina Dziugaite, Kyle Hsu, Waseem Gharbieh, Gabriel Arpino, and
  Daniel Roy.
\newblock On the role of data in {PAC}-{B}ayes.
\newblock In \emph{The 24th International Conference on Artificial Intelligence
  and Statistics, {AISTATS} 2021}, volume 130, pages 604--612, 2021.

\bibitem[Gr{\"u}nwald et~al.(2021)Gr{\"u}nwald, Steinke, and
  Zakynthinou]{grunwald2021pac}
Peter Gr{\"u}nwald, Thomas Steinke, and Lydia Zakynthinou.
\newblock {PAC}-{B}ayes, {MAC}-{B}ayes and conditional mutual information: Fast
  rate bounds that handle general {VC} classes.
\newblock \emph{arXiv preprint arXiv:2106.09683}, 2021.

\bibitem[Hafez{-}Kolahi et~al.(2021)Hafez{-}Kolahi, Moniri, Kasaei, and
  Baghshah]{KolahiMKB21}
Hassan Hafez{-}Kolahi, Behrad Moniri, Shohreh Kasaei, and Mahdieh~Soleymani
  Baghshah.
\newblock Rate-distortion analysis of minimum excess risk in {B}ayesian
  learning.
\newblock \emph{CoRR}, abs/2105.04180, 2021.

\bibitem[Mammen and Tsybakov(1995)]{mammen1995asymptotical}
Enno Mammen and Alexander~B Tsybakov.
\newblock Asymptotical minimax recovery of sets with smooth boundaries.
\newblock \emph{The Annals of Statistics}, pages 502--524, 1995.

\bibitem[Nakkiran et~al.(2020)Nakkiran, Kaplun, Bansal, Yang, Barak, and
  Sutskever]{nakkiran2019deep}
Preetum Nakkiran, Gal Kaplun, Yamini Bansal, Tristan Yang, Boaz Barak, and Ilya
  Sutskever.
\newblock Deep double descent: Where bigger models and more data hurt.
\newblock \emph{ICLR}, 2020.

\bibitem[Neyshabur et~al.(2018)Neyshabur, Bhojanapalli, McAllester, and
  Srebro]{NeyshaburBMS18}
Behnam Neyshabur, Srinadh Bhojanapalli, David McAllester, and Nathan Srebro.
\newblock A {PAC}-{B}ayesian approach to spectrally-normalized margin bounds
  for neural networks.
\newblock In \emph{6th International Conference on Learning Representations,
  {ICLR} 2018}, 2018.

\bibitem[Nokleby et~al.(2016)Nokleby, Beirami, and Calderbank]{NoklebyB16}
Matthew Nokleby, Ahmad Beirami, and Robert Calderbank.
\newblock Rate-distortion bounds on {B}ayes risk in supervised learning.
\newblock In \emph{2016 IEEE International Symposium on Information Theory
  (ISIT)}, pages 2099--2103, 2016.
\newblock \doi{10.1109/ISIT.2016.7541669}.

\bibitem[Russo and Zou(2016)]{pmlr-v51-russo16}
Daniel Russo and James Zou.
\newblock Controlling bias in adaptive data analysis using information theory.
\newblock In \emph{Proceedings of the 19th International Conference on
  Artificial Intelligence and Statistics}, volume~51, pages 1232--1240, Cadiz,
  Spain, 09--11 May 2016.

\bibitem[Sauer(1972)]{sauer1972density}
Norbert Sauer.
\newblock On the density of families of sets.
\newblock \emph{Journal of Combinatorial Theory, Series A}, 13\penalty0
  (1):\penalty0 145--147, 1972.

\bibitem[Steinke and Zakynthinou(2020)]{SteinkeZ20}
Thomas Steinke and Lydia Zakynthinou.
\newblock Reasoning about generalization via conditional mutual information.
\newblock In \emph{Proceedings of Thirty Third Conference on Learning Theory},
  volume 125, pages 3437--3452, 09--12 Jul 2020.

\bibitem[Tsybakov(2004)]{tsybakov2004optimal}
Alexander~B Tsybakov.
\newblock Optimal aggregation of classifiers in statistical learning.
\newblock \emph{The Annals of Statistics}, 32\penalty0 (1):\penalty0 135--166,
  2004.

\bibitem[van Erven et~al.(2015)van Erven, Gr{\"u}nwald, Mehta, Reid, and
  Williamson]{van2015fast}
Tim van Erven, Peter~D Gr{\"u}nwald, Nishant~A Mehta, Mark~D Reid, and Robert~C
  Williamson.
\newblock Fast rates in statistical and online learning.
\newblock \emph{Journal of Machine Learning Research}, 16:\penalty0 1793--1861,
  2015.

\bibitem[Wu(2020)]{wu2017lecture}
Yihong Wu.
\newblock Information-theoretic methods for high-dimensional statistics.
\newblock \emph{Course Lecture Notes}, 2020.

\bibitem[Xu and Raginsky(2017)]{XuR17}
Aolin Xu and Maxim Raginsky.
\newblock Information-theoretic analysis of generalization capability of
  learning algorithms.
\newblock In \emph{Advances in Neural Information Processing Systems},
  volume~30, 2017.

\bibitem[Xu and Raginsky(2020)]{XuR20}
Aolin Xu and Maxim Raginsky.
\newblock Minimum excess risk in {B}ayesian learning.
\newblock \emph{CoRR}, abs/2012.14868, 2020.
\newblock URL \url{https://arxiv.org/abs/2012.14868}.

\bibitem[Zhang et~al.(2017)Zhang, Bengio, Hardt, Recht, and
  Vinyals]{ZhangBHRV17}
Chiyuan Zhang, Samy Bengio, Moritz Hardt, Benjamin Recht, and Oriol Vinyals.
\newblock Understanding deep learning requires rethinking generalization.
\newblock In \emph{5th International Conference on Learning Representations,
  {ICLR} 2017, Toulon, France, April 24-26, 2017, Conference Track
  Proceedings}, 2017.

\end{thebibliography}

\newpage
\appendix

\section{Proofs of results from main text}

\subsection{Proofs for Section \ref{sect:bayes.learning}}
\renewcommand*{\proofname}{Proof of Lemma \ref{lem:risk.and.generalization}.}
\begin{proof}
This is a straightforward observation from the definitions of realizable models and interpolating models.
\end{proof}
\renewcommand*{\proofname}{Proof.}

\subsection{Proofs for Section \ref{sect:rd.bounds}}
\renewcommand*{\proofname}{Proof of Theorem \ref{thm:dpi}.}
\begin{proof}
    We first observe that the model parameters $W$, training set $Z^n$, and learned model parameters $\widehat{W}$ satisfy the Markov chain $W \to Z^n \to \widehat{W}$, due to the factorization $P_{Z^n, W, \widehat{W}} = Q_{\widehat{W} | Z^n}P_{Z^n | W}P_W$. Then, by the data processing inequality,
    \begin{equation}
        I(\widehat{W} ; W) \leq I(Z^n ; W).
    \end{equation}
    Minimizing $I(\widehat{W} ; W)$ subject to the constraint on the excess Bayes risk gives the result.
\end{proof}
\renewcommand*{\proofname}{Proof.}

\subsection{Proofs for Section \ref{sec:bounds-on-rate-distortion}}

The following result relates the excess loss to the discrepancy set.
\begin{lemma}[0-1 Excess loss]\label{lem:zero.one.loss}
    Let
    \begin{equation}
        L(w) := \sup_x \left[1 - \max_y p(y|x,w)\right] \leq 1-1/|\mathcal{Y}|
    \end{equation}
    denote the worst case 0-1 error of the ML classifier when the true model parameters $w$ are fixed and known.
    Then, the excess loss $D(w || v)$ is lower and upper bounded by
    \begin{equation}
      (1 - 2 L(w)) \mathbb{P}[X \in \mathcal{E}(w,v)] \leq   D(w || v) \leq \mathbb{P}[X \in \mathcal{E}(w,v)].
    \end{equation}
    In the realizable case, $L(w)=0$, and we can express the excess loss exactly:
    \begin{equation}
         D(w || v) = \mathbb{P}[X \in \mathcal{E}(w,v)].
    \end{equation}
\end{lemma}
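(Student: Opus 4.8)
The plan is to analyze the excess loss pointwise in the test input $x$ and then integrate against the marginal of $X$ under $P_{Z|W=w}$. Starting from $D(w\|v) = \mathbb{E}_{P_{Z|W=w}}[\ell(Z;v) - \ell(Z;w)]$ with $\ell$ the 0-1 loss, I would condition on $X=x$ and evaluate the inner expectation over $Y \sim p(\cdot\,|x,w)$. Writing $\hat y_w(x) := \arg\max_y p(y|x,w)$ and $\hat y_v(x) := \arg\max_y p(y|x,v)$, one has $\mathbb{E}_{Y|x,w}[\ell(Z;v)] = 1 - p(\hat y_v(x)|x,w)$ and $\mathbb{E}_{Y|x,w}[\ell(Z;w)] = 1 - \max_y p(y|x,w)$, so the conditional excess loss is $g(x) := \max_y p(y|x,w) - p(\hat y_v(x)|x,w)$. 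When $x \notin \mathcal{E}(w,v)$ the two classifiers agree, $\hat y_w(x) = \hat y_v(x)$, and $g(x) = 0$; hence only $x \in \mathcal{E}(w,v)$ contributes, giving $D(w\|v) = \mathbb{E}_{X|W=w}[\,g(X)\,\mathbb{I}[X \in \mathcal{E}(w,v)]\,]$.

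Next I would bound $g(x)$ for $x \in \mathcal{E}(w,v)$. The upper bound is immediate: $g(x) \le \max_y p(y|x,w) \le 1$, and integrating over the discrepancy set yields $D(w\|v) \le \mathbb{P}[X \in \mathcal{E}(w,v)]$. For the lower bound, note that by definition of $L(w)$ we have $\max_y p(y|x,w) \ge 1 - L(w)$ for every $x$; moreover $x \in \mathcal{E}(w,v)$ forces $\hat y_v(x) \neq \hat y_w(x)$, so $p(\hat y_v(x)|x,w)$ is one of the non-maximal masses of $p(\cdot\,|x,w)$, whose total equals $1 - \max_y p(y|x,w) \le L(w)$; hence $p(\hat y_v(x)|x,w) \le L(w)$. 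Combining the two estimates gives $g(x) \ge (1-L(w)) - L(w) = 1 - 2L(w)$ on $\mathcal{E}(w,v)$, and integrating gives $D(w\|v) \ge (1 - 2L(w))\,\mathbb{P}[X \in \mathcal{E}(w,v)]$.

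Finally, for a realizable model family each $p(\cdot\,|x,w)$ is $\{0,1\}$-valued and sums to one, so $\max_y p(y|x,w) = 1$, $L(w) = 0$, and $p(\hat y_v(x)|x,w) = 0$ on the discrepancy set; both bounds then collapse to the exact identity $D(w\|v) = \mathbb{P}[X \in \mathcal{E}(w,v)]$. I do not expect a real obstacle here: the only mildly delicate points are the bookkeeping in the conditional expectation over $Y$ and ties in the $\arg\max$, which are harmless (there are no ties in the realizable case, and in general any tie-breaking leaves $p(\hat y_v(x)|x,w)$ a non-maximal mass, so the argument is unchanged). The substantive content is just the pointwise computation of $g(x)$ together with the observation that non-maximal masses sum to at most $L(w)$.
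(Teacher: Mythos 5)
Your proof is correct and follows essentially the same route as the paper's: condition on $X$, note that the conditional excess loss vanishes off the discrepancy set $\mathcal{E}(w,v)$, and bound it between $1-2L(w)$ and $1$ on that set. If anything, your bookkeeping via $p(\hat y_v(x)\mid x,w)\le 1-\max_y p(y\mid x,w)\le L(w)$ is slightly more careful than the paper's intermediate displayed identity, which is exact only for binary $\mathcal{Y}$ (though the paper's final bounds remain valid in the multiclass case).
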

\renewcommand*{\proofname}{Proof of Lemma \ref{lem:zero.one.loss}.}
\begin{proof}
    The realizable case is immediate: the true classifier $p(y|x,w)$ makes no mistakes, so the 0-1 error is equal to the probability of a data point $x$ such that $p(y|x,v)$ gives a different classification.
    
    For the lower bound, observe that for fixed $x$ and $w$, the expected 0-1 loss of a classifier using $p(y|x,v)$ is
    \begin{align}
         1 - \max_y p(y|x,v) &=
        \mathbb{I}[\arg\max_y p(y|x,v) \neq \arg\max_y p(y|x,w)] \cdot \max_y p(y|x,w)\nonumber\\
        & ~~~+ \mathbb{I}[\arg\max_y p(y|x,v) = \arg\max_y p(y|x,w)] \cdot (1 - \max_y p(y|x,w))
    \end{align}
    This observation follows from the fact that there are two possible ways we can make an error using $p(y|x,v)$: $p(y|x,v)$ makes a {\em different} classification than $p(y|x,w)$ and $p(y|x,w)$ is {\em correct}, or $p(y|x,v)$ makes the {\em same} classification as $p(y|x,w)$ but $p(y|x,w)$ is {\em incorrect}. %
    Therefore, the excess loss $D(w || v)$ can be rewritten as
     \begin{align}
        D(w || v) &=  \mathbb{E}_X [1 - \max_y p(y|X,v)] - \mathbb{E}_X [1 - \max_y p(y|X,w)] \\
        & = \mathbb{E}_X \left[\mathbb{I}[\arg\max_y p(y|X,v) \neq \arg\max_y p(y|X,w)] \cdot \max_y p(y|X,w)\right] \nonumber\\
        & ~~~+ \mathbb{E}_X \left[ \mathbb{I}[\arg\max_y p(y|X,v) = \arg\max_y p(y|X,w)] \cdot (1 - \max_y p(y|X,w))\right] \nonumber\\
        & ~~~- \mathbb{E}_X \left[1 - \max_y p(y|X,w)\right] \\
        & = \mathbb{E}_X \left[\mathbb{I}[\arg\max_y p(y|X,v) \neq \arg\max_y p(y|X,w)] \cdot \max_y p(y|X,w)\right] \nonumber\\
        & ~~~ - \mathbb{E}_X \left[ \mathbb{I}[\arg\max_y p(y|X,v) \neq \arg\max_y p(y|X,w)] \cdot (1 - \max_y p(y|X,w))\right].
\end{align}
To obtain the lower bound, notice that
\begin{align}
       D(w || v)  & \geq  \mathbb{E}_X \left[\mathbb{I}[\arg\max_y p(y|X,v) \neq \arg\max_y p(y|X,w)]\right] \times (1- L(w)) \nonumber\\
        & ~~~-  \mathbb{E}_X \left[ \mathbb{I}[\arg\max_y p(y|X,v) \neq \arg\max_y p(y|X,w)]\right]\times  L(w) \\
        & = \mathbb{P}_X[X \in \mathcal{E}(w,v)] (1- 2L(w)),
    \end{align}
which completes the lower bound. On the other hand,
\begin{align}
       D(w || v)  & \leq  \mathbb{E}_X \left[\mathbb{I}[\arg\max_y p(y|X,v) \neq \arg\max_y p(y|X,w)]\right] \times 1  \nonumber\\
        & ~~~-  \mathbb{E}_X \left[ \mathbb{I}[\arg\max_y p(y|X,v) \neq \arg\max_y p(y|X,w)]\right] \times 0 \\
        & = \mathbb{P}_X[X \in \mathcal{E}(w,v)] ,
    \end{align}
which establishes the upper bound.
\end{proof}
\renewcommand*{\proofname}{Proof.}

\renewcommand*{\proofname}{Proof of Theorem \ref{thm:zero-one.rd}.}
\begin{proof}
    First, we invoke the usual Shannon lower bound on $R(D)$:
    \begin{equation}
        R(D) \geq h(W) - \sup_{U} h(U),
    \end{equation}
    where the supremum is over all $U$ satisfying the distortion constraint $D(W\|W+U) \leq D.$ Invoking Lemma~\ref{lem:zero.one.loss} and taking the expectation, we can relax the constraint to:
    \begin{equation}
   (1-2 L_{\max})\mathbb{E}_{W,U}[ \mathbb{P}( X \in \mathcal{E}(W,W+U))]  \leq D.
    \label{eq:non-realizable-K}
    \end{equation}
    By the assumption of $\mu$-sensitivity in expectation, this constraint can be relaxed to
    \begin{equation}
        \mathbb{E}[\| U \|_\infty] \leq \frac{D}{(1-2L_{\max}) \mu}.
    \end{equation}
    It is well known that the element-wise Laplace distribution maximizes entropy with respect to a constraint on the expected $L_\infty$ norm. Choosing the scale factor to match the constraint and computing the resulting entropy yields the claim. 
    In the realizable case, we notice that~\eqref{eq:non-realizable-K} would still hold if we set $L_{\max} = 0$.
\end{proof}
\renewcommand*{\proofname}{Proof.}

\renewcommand*{\proofname}{Proof of Lemma~\ref{lemma:half-space-discrepancy}.}
\begin{proof}
    The proof follows from these steps: 
    \begin{align}
         \mathbb{P}[X \in \mathcal{E}(w,v)] &= \frac{1}{\pi}\cos^{-1}\left(w^Tv)\right) \\
         & =  \frac{1}{\pi}\cos^{-1}\left(\frac{\|w\|_2^2 + \|v\|_2^2 - \|w - v\|_2^2}{2})\right)\\
         & =  \frac{1}{\pi}\cos^{-1}\left(1 - \frac{1}{2} \|w - v\|_2^2)\right)\\
         & \geq \frac{1}{\pi} \| w-v \|_2 \label{eq:arccos-bound}\\
         & \geq \frac{1}{\pi} \| w-v \|_\infty,
    \end{align}
where~\eqref{eq:arccos-bound} follows from the fact that $\cos^{-1}\left(1-\frac{1}{2}x^2\right) \geq x$ for $x \in [0, 2].$
\end{proof}
\renewcommand*{\proofname}{Proof.}

\subsection{Proofs of Section~\ref{sec:bounds-on-MI}}

\renewcommand*{\proofname}{Proof of Lemma \ref{lem:realizable.mi}.}
\begin{proof}
    The first part is a consequence of the chain rule for mutual information:
    \begin{align}
        I(W; Z^n) &= I(W; X^n) + I(W; Y^n | X^n) \\
        &= H(Y^n | X^n) - H(Y^n | X^n, W) \\
        &= H(Y^n | X^n),
    \end{align}
    where $I(W; X^n) = 0$ because $X_i$ and $W$ are independent, and
    where $H(Y^n | X^n, W) = 0$ due to realizability; given the model parameters and data samples, the labels are determistic. To get the second expression, observe that
    \begin{align}
        p(y^n | x^n) &= \mathbb{E}_W [p(y^n | x^n, W)] \\
        &= \mathbb{P}_W[W \in \mathcal{C}(z^n)],
    \end{align}
    so $H(Y^n|X^n) = -\mathbb{E}_{Z^n} \mathbb{E}_{W}[\log p(Y^n|X^n)] = -\mathbb{E}_{Z^n} [ \log (\mathbb{P}_W[W \in \mathcal{C}(Z^n)]) ]$ as claimed.
\end{proof}
\renewcommand*{\proofname}{Proof.}

\newpage
\section{Bounds on smooth model families}\label{app:smooth.models}
While the focus of this paper is on realizable models, we note that these techniques are also applicable to smooth model families, and we can use them to recover classic results. 

\subsection{$R(D)$ for smooth model families}
We consider first the simpler case of log loss over model families that satisfy a smoothness condition. In this setting, we employ a quadratic approximation to the KL divergence, and $R(D)$ is approximated by the rate-distortion function of a Gaussian source.

First, we give a technical definition of the required smoothness conditions.
\begin{definition}[Smooth model family]
    We say a model family $P_{Z|W}$ is {\em smooth} if the following conditions hold: (1) The Fisher information matrices $\mathcal{I}_{y|x,W}$ and $\mathcal{I}_{z|W}$ exist, are nonsingular, and have finite expectation both elementwise and in terms of their determinants, and (2) the maximum-likelihood estimator of $W$ from $Z^n$ achieves the Cramer-Rao bound as $n \to \infty$; i.e. the estimation error converges on a Gaussian with inverse covariance given by the Fisher information matrix.
\end{definition}
These conditions are taken from \citet{ClarkeB90}, who show that these conditions imply a tractable approximation on $I(Z^n;W)$ in terms of the Fisher information. Under these conditions, we can bound $R(D)$ with respect to the log loss.

\begin{theorem}\label{thm:quadratic.rd}
    Under log loss and a smooth model family $P_{Z|W}$, the rate-distortion function is bounded by
    \begin{equation}
    R(D) \geq \Bigg[ h(W) - \frac{d_w}{2}\log\left(\frac{4\pi e D}{d_w}\right) +
    \frac{1}{2}\log |\mathbb{E}[\mathcal{I}_{Y|X,W}]| + o_n(1) \Bigg]^+.
\end{equation}
\end{theorem}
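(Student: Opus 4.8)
The plan is to mimic the proof of Theorem~\ref{thm:zero-one.rd}: start from the Shannon lower bound $R(D) \geq h(W) - \sup_U h(U)$, where the supremum is over perturbations $U$ with $\mathbb{E}_{W}[D(W\|W+U)] \leq D$, and then control $D(w\|w+u)$ from above by a quadratic form in $u$ so that the constraint becomes (approximately) a constraint on a covariance, whose entropy-maximizer is Gaussian. First I would write out the excess log loss explicitly: $D(w\|w+u) = \mathbb{E}_{Z|W=w}[-\log p(Z|w+u) + \log p(Z|w)] = \mathrm{KL}(P_{Z|w}\,\|\,P_{Z|w+u})$. Under the smoothness hypotheses the second-order Taylor expansion of the KL divergence in the perturbation gives $\mathrm{KL}(P_{Z|w}\,\|\,P_{Z|w+u}) = \tfrac12 u^T \mathcal{I}_{Z|W=w}\, u + o(\|u\|^2)$, and averaging over $W$ yields $\mathbb{E}_W[D(W\|W+U)] \approx \tfrac12 \,\mathbb{E}_W[\, U^T \mathcal{I}_{Z|W}\, U\,]$; replacing $\mathcal{I}_{Z|W}$ by its expectation $\mathbb{E}[\mathcal{I}_{Z|W}]$ (uniformly in the relevant regime, absorbing the error into $o_n(1)$) relaxes the distortion constraint to $\tfrac12\,\mathrm{tr}\!\big(\mathbb{E}[\mathcal{I}_{Z|W}]\,\mathrm{Cov}(U)\big) \leq D$.

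Next I would maximize $h(U)$ subject to this single scalar trace constraint. By the standard Gaussian maximum-entropy argument, the optimal $U$ is zero-mean Gaussian, and optimizing the covariance against the linear constraint $\mathrm{tr}(A\,\Sigma) \le 2D$ with $A = \mathbb{E}[\mathcal{I}_{Z|W}] \succ 0$ gives $\Sigma \propto A^{-1}$, namely $\Sigma = \tfrac{2D}{d_w} A^{-1}$, with resulting entropy $h(U) = \tfrac{d_w}{2}\log(2\pi e) + \tfrac12 \log\det\Sigma = \tfrac{d_w}{2}\log\!\big(\tfrac{4\pi e D}{d_w}\big) - \tfrac12\log\det\mathbb{E}[\mathcal{I}_{Z|W}]$. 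Substituting into the Shannon bound gives $R(D) \geq h(W) - \tfrac{d_w}{2}\log\!\big(\tfrac{4\pi e D}{d_w}\big) + \tfrac12\log|\mathbb{E}[\mathcal{I}_{Z|W}]| + o_n(1)$, and taking positive part yields the claim. (I would note the minor bookkeeping point that the theorem as stated writes $\mathcal{I}_{Y|X,W}$; for the $P_{X|W}=P_X$ case $\mathcal{I}_{Z|W} = \mathcal{I}_{Y|X,W}$ since $X$ carries no information about $W$, so the two determinants coincide, consistent with the setup in Lemma~\ref{lem:realizable.mi}.)

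The main obstacle is making the quadratic approximation rigorous rather than merely formal: the Taylor remainder in $\mathrm{KL}(P_{Z|w}\,\|\,P_{Z|w+u})$ is only small when $\|u\|$ is small, whereas the entropy-maximizing Gaussian $U$ has unbounded support, so one must argue that the optimizing distribution concentrates at scale $\sqrt{D}$ (hence $u$ is small for small $D$, equivalently large $n$) and that the contribution of the tails is negligible — this is exactly where the $o_n(1)$ term comes from and why the bound is asymptotic in $n$. I expect to handle this the same way \citet{ClarkeB90} handle the companion expansion of $I(Z^n;W)$: invoke the smoothness/regularity conditions to get uniform control of the third-order terms on a high-probability neighborhood, and relate $D$ to $1/n$ via the achievability side so that $D \to 0$ corresponds to $n \to \infty$. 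A secondary technical point is justifying the interchange of the expectation over $W$ with the Taylor expansion and replacing $\mathcal{I}_{Z|W}$ by $\mathbb{E}[\mathcal{I}_{Z|W}]$; this uses the assumed finite elementwise and determinant expectations of the Fisher information, together with Jensen/concavity of $\log\det$ to ensure the replacement only weakens the bound in the controlled direction.
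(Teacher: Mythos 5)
Your strategy---Shannon lower bound, second-order expansion of the excess log loss in the perturbation, then Gaussian maximum entropy under the resulting quadratic constraint---is exactly the route the paper intends; the paper in fact prints no proof of this theorem, only the remark that it follows from a quadratic approximation of the KL divergence yielding a Gaussian-type rate--distortion function, and your closed-form optimization ($\Sigma = \frac{2D}{d_w}A^{-1}$, $h(U) = \frac{d_w}{2}\log\frac{4\pi e D}{d_w} - \frac{1}{2}\log\det A$) reproduces the stated constants correctly. Two setup points need fixing. First, the excess loss under $\ell_{\text{ce}}(z;\widehat{w}) = -\log p(y|x,\widehat{w})$ is the \emph{conditional} divergence $D(w\|w+u) = \mathbb{E}_{X|w}[\mathrm{KL}(P_{Y|X,w}\,\|\,P_{Y|X,w+u})]$, not $\mathrm{KL}(P_{Z|w}\|P_{Z|w+u})$; its Hessian at $u=0$ is $\mathcal{I}_{Y|X,w}$ directly, which is precisely why that matrix (and not $\mathcal{I}_{Z|W}$) appears in the statement and why the ratio of the two determinants in the subsequent corollary is nontrivial. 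Your patch via $P_{X|W}=P_X$ is neither assumed in the appendix nor needed. Second, the relaxation must \emph{enlarge} the feasible set of the entropy maximization, so what you need is that the quadratic form lower-bounds the distortion up to the Taylor remainder; ``controlling $D(w\|w+u)$ from above'' is the wrong direction, though the two-sided expansion supplies the inequality you actually use.

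The step I would flag as a genuine gap is the passage from the constraint $\mathbb{E}_W[\mathrm{tr}(\mathcal{I}_{Y|X,W}\,\mathbb{E}[UU^T\mid W])] \le 2D$ to the decoupled constraint $\mathrm{tr}(\mathbb{E}[\mathcal{I}_{Y|X,W}]\,\mathrm{Cov}(U)) \le 2D$. These coincide only when $U$ is independent of $W$, but the infimum in \eqref{eqn:rd.definition} is over arbitrary test channels $Q_{\widehat{W}|W}$, which may concentrate the perturbation where the Fisher information is small: already in one dimension, if $\mathcal{I}_{Y|X,W}$ takes values $\epsilon$ and $2-\epsilon$ with equal probability, a channel that is nearly deterministic where the information is large and has variance of order $D/\epsilon$ where it is small satisfies the true constraint while $h(U)$ exceeds $\frac{1}{2}\log(4\pi e D)$, so the claimed bound on $\sup_U h(U)$ fails for such channels. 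Your appeal to Jensen does not close this: concavity of $\log\det$ gives $\log|\mathbb{E}[\mathcal{I}]| \ge \mathbb{E}[\log|\mathcal{I}|]$, so the determinant-of-expectation form stated in the theorem is the \emph{stronger} conclusion, and Jensen only lets you move to the weaker $\mathbb{E}[\log|\mathcal{I}|]$ form. To repair this you must either argue that restricting to $U$ independent of $W$ is without loss for the entropy maximization, prove the weaker $\mathbb{E}_W[\log|\mathcal{I}_{Y|X,W}|]$ version by a conditional argument, or impose additional regularity (e.g., near-constancy of the Fisher information on the support of $P_W$). The tail/remainder issue you discuss at length is real but is the expected source of the $o_n(1)$; the state-dependence of the quadratic weight is the part your proposal does not actually close.
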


This bound resembles the classic Gaussian rate-distortion function, which is due to the quadratic approximation of the KL divergence. In particular, Theorem~\ref{thm:quadratic.rd} holds exactly (without the $o_n(1)$ term) if the model class is Gaussian.

\subsection{$I(Z^n; W)$ for smooth model families}\label{sect:smooth.mi.results}
As mentioned above, we need to estimate both $R(D)$ and $I(Z^n;W)$ to bound the Bayes risk, and a classic result by \citet{ClarkeB90} gives a tractable asymptotic expression for $I(Z^n;W)$.

\begin{lemma}[\citet{ClarkeB90}]\label{lem:data.mi}
For smooth model family $P_{ZW}$, the following approximation holds as $n \to \infty$:
\begin{equation}\label{eqn:data.mi}
	I(Z^n; W) = \frac{d_w}{2} \log \left( \frac{n}{2\pi e} \right) +  \frac{1}{2}\mathbb{E}_W[\log |\mathcal{I}_{Z|W}|]  +
	h(W)  + o_n(1),
\end{equation}
where $\mathcal{I}_{Z|w}$ is the Fisher information matrix evaluated at $w$, i.e. the Hessian of $\mathbb{E}_Z [-\log p(Z|w)]$. Note that this is different from the Fisher information matrix used in Lemma \ref{thm:quadratic.rd}.
\end{lemma}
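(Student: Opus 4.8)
The plan is to reproduce the asymptotic argument of \citet{ClarkeB90}, which rests on a Laplace approximation of the Bayesian mixture density. Write $p(z^n) := \int_{\mathcal{W}} p(z^n \mid \theta)\, p_W(\theta)\, d\theta$ for the marginal density of the training set and start from the identity
\begin{equation}
    I(Z^n; W) = \mathbb{E}_{W, Z^n}\!\left[\log \frac{p(Z^n \mid W)}{p(Z^n)}\right].
\end{equation}
Conditioned on $W = w$, the smoothness conditions ensure that $-\tfrac{1}{n}\log p(Z^n \mid \theta)$ obeys a law of large numbers and concentrates around $\theta = w$ with curvature tending to $\mathcal{I}_{Z \mid w}$; consequently the MLE $\widehat\theta_n$ converges to $w$ and the integral defining $p(z^n)$ is governed by a shrinking neighborhood of $\widehat\theta_n$.

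The core step is Laplace's method: Taylor-expanding $\log p(z^n \mid \theta)$ to second order about $\widehat\theta_n$ (the linear term vanishing since $\widehat\theta_n$ is a critical point) and using that the observed information per sample converges to $\mathcal{I}_{Z\mid W}$ yields
\begin{equation}
    \log p(Z^n) = \log p(Z^n \mid \widehat\theta_n) + \log p_W(\widehat\theta_n) + \frac{d_w}{2}\log\frac{2\pi}{n} - \frac{1}{2}\log\bigl|\mathcal{I}_{Z\mid W}\bigr| + o_n(1).
\end{equation}
Subtracting from $\log p(Z^n \mid W)$ and expanding $\log p(Z^n \mid W) - \log p(Z^n \mid \widehat\theta_n) = -\tfrac{n}{2}(W-\widehat\theta_n)^T \mathcal{I}_{Z\mid W}(W-\widehat\theta_n) + o_n(1)$ gives
\begin{equation}
    \log \frac{p(Z^n \mid W)}{p(Z^n)} = -\frac{n}{2}(W-\widehat\theta_n)^T \mathcal{I}_{Z\mid W}(W-\widehat\theta_n) - \log p_W(\widehat\theta_n) + \frac{d_w}{2}\log\frac{n}{2\pi} + \frac{1}{2}\log\bigl|\mathcal{I}_{Z\mid W}\bigr| + o_n(1).
\end{equation}
By the asymptotic-normality / Cramér–Rao hypothesis, $\sqrt{n}(W - \widehat\theta_n)$ is asymptotically centered Gaussian with covariance $\mathcal{I}_{Z\mid W}^{-1}$, so the quadratic form converges in distribution to a $\chi^2_{d_w}$ variable with mean $d_w$; meanwhile $\widehat\theta_n \to W$ and continuity of $p_W$ give $\log p_W(\widehat\theta_n) \to \log p_W(W)$ with $\mathbb{E}_W[-\log p_W(W)] = h(W)$. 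Taking expectations and collecting terms produces
\begin{equation}
    I(Z^n;W) = \frac{d_w}{2}\log\frac{n}{2\pi} - \frac{d_w}{2} + h(W) + \frac{1}{2}\mathbb{E}_W\!\bigl[\log\bigl|\mathcal{I}_{Z\mid W}\bigr|\bigr] + o_n(1),
\end{equation}
which is the claimed expression once $-d_w/2$ is folded into $\tfrac{d_w}{2}\log\frac{n}{2\pi e}$.

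The genuine difficulty is not this formal manipulation but controlling the $o_n(1)$ terms: one must show that the mass of the mixture integral lying outside the shrinking neighborhood of $\widehat\theta_n$ is asymptotically negligible, and that convergence in distribution of the quadratic form and of $\log p_W(\widehat\theta_n)$ upgrades to convergence of their expectations. Both steps require uniform-integrability and tightness estimates, and these are precisely what the ``smooth model family'' hypotheses — nonsingular Fisher information, finite expected log-determinants, and attainment of the Cramér–Rao bound — are imposed to guarantee. Since this is exactly where \citet{ClarkeB90} carry out the substantive analysis, I would invoke their theorem for the error control rather than re-deriving it, and present the steps above only as the heuristic skeleton that makes the final expression transparent.
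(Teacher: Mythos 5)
Your proposal is correct: it is the standard Laplace-approximation derivation of the Clarke--Barron expansion, with the bookkeeping ($\chi^2_{d_w}$ mean $d_w$ absorbing the $e$ in $\log\frac{n}{2\pi e}$, and $-\log p_W(\widehat\theta_n)\to h(W)$ in expectation) done correctly, and with the genuinely hard uniform-integrability step appropriately deferred to \citet{ClarkeB90}. The paper itself gives no proof of this lemma — it simply cites the reference — so your sketch matches the intended (cited) argument exactly.
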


Applying Theorem \ref{thm:dpi} to the above bound on $I(Z^n;W)$ and the  bound on $R(D)$ from Theorem \ref{thm:quadratic.rd} gives the following bound on the excess risk.
\begin{corollary}
    Under log loss and a smooth model family $P_{Z|W}$, the excess risk for any learning rule $Q_{\widehat{W}|Z^n}$ is bounded by
    \begin{equation}
        E_n \geq \frac{d}{2n} \cdot \frac{\left|\mathbb{E}[\mathcal{I}_{Y|X,W}]\right|^{1/d}}{ \left|\mathbb{E}[\mathcal{I}_{Z|W}]\right|^{1/d}} (1 + o_n(1)).
    \end{equation}
\end{corollary}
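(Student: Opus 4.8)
The plan is to chain the three results assembled above: the data-processing bound of Theorem~\ref{thm:dpi}, the lower bound on $R(D)$ from Theorem~\ref{thm:quadratic.rd}, and the asymptotic expansion of $I(Z^n;W)$ from Lemma~\ref{lem:data.mi}. By Theorem~\ref{thm:dpi}, a learning rule with excess risk $E_n\leq D$ exists only if $I(Z^n;W)\geq R(D)$; contrapositively, whenever $R(D)>I(Z^n;W)$ no rule achieves excess risk $D$, so $E_n$ is bounded below by the threshold value $D^\star$ at which $R(D)$ and $I(Z^n;W)$ cross. Since a lower bound on $R(D)$ and an (asymptotically exact) formula for $I(Z^n;W)$ are both available in closed form, I would locate this threshold by substituting the two expansions into $R(D)\leq I(Z^n;W)$ and solving for $D$.

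Carrying out the substitution, the differential entropy $h(W)$ occurs with the same coefficient on both sides and cancels. The two $\tfrac{d_w}{2}\log(\cdot)$ terms combine using $\tfrac{n}{2\pi e}\cdot\tfrac{4\pi e D}{d_w}=\tfrac{2nD}{d_w}$, and collecting the Fisher-information terms leaves, at the crossing point,
\begin{equation}
    \frac{d_w}{2}\log\!\left(\frac{2nD^\star}{d_w}\right) = \frac12\log|\mathbb{E}[\mathcal{I}_{Y|X,W}]| - \frac12\,\mathbb{E}_W[\log|\mathcal{I}_{Z|W}|] + o_n(1),
\end{equation}
so that $D^\star=\tfrac{d_w}{2n}\,|\mathbb{E}[\mathcal{I}_{Y|X,W}]|^{1/d_w}\exp\!\left(-\tfrac{1}{d_w}\mathbb{E}_W[\log|\mathcal{I}_{Z|W}|]\right)(1+o_n(1))$. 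The last step is to pass from $\exp\!\left(\tfrac{1}{d_w}\mathbb{E}_W[\log|\mathcal{I}_{Z|W}|]\right)$ to $|\mathbb{E}_W[\mathcal{I}_{Z|W}]|^{1/d_w}$: since $M\mapsto\log|M|$ is concave on the positive-definite cone, Jensen's inequality gives $\mathbb{E}_W[\log|\mathcal{I}_{Z|W}|]\leq\log|\mathbb{E}_W[\mathcal{I}_{Z|W}]|$, so the denominator can only increase and the resulting (weaker) lower bound is exactly the stated inequality, with $d=d_w$.

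I expect the only real subtlety to be the bookkeeping of the asymptotics. Theorem~\ref{thm:quadratic.rd} provides a \emph{lower} bound on $R(D)$ that carries its own $o_n(1)$ term and a positive-part operator, while Lemma~\ref{lem:data.mi} provides an \emph{asymptotic equality} for $I(Z^n;W)$; one must therefore argue that for all sufficiently large $n$ the crossing point is governed by the leading-order expressions, with the residual errors absorbed into the factor $1+o_n(1)$, and that at this value the positive part in Theorem~\ref{thm:quadratic.rd} is active --- which holds because $D^\star\to0$ as $n\to\infty$ while $h(W)$ and the Fisher-information terms remain fixed, driving the bracketed quantity to $+\infty$. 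Granting this, the remaining manipulations --- the cancellation of $h(W)$, the logarithm arithmetic, and the Jensen step --- are routine.
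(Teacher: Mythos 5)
Your proposal is correct and follows exactly the route the paper intends: the corollary is presented as an immediate consequence of chaining Theorem~\ref{thm:dpi} with the $R(D)$ bound of Theorem~\ref{thm:quadratic.rd} and the Clarke--Barron expansion of Lemma~\ref{lem:data.mi}, and your algebra (cancellation of $h(W)$, the combination $\tfrac{n}{2\pi e}\cdot\tfrac{4\pi e D}{d_w}=\tfrac{2nD}{d_w}$, and solving for $D$) reproduces it. Your observation that a Jensen step on the concave log-determinant is needed to pass from $\exp\!\left(\tfrac{1}{d_w}\mathbb{E}_W[\log|\mathcal{I}_{Z|W}|]\right)$ to $|\mathbb{E}[\mathcal{I}_{Z|W}]|^{1/d_w}$ in the denominator, and that this only weakens the lower bound in the harmless direction, is a detail the paper glosses over entirely and is handled correctly.
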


The KL divergence scales as $\Omega(d/n)$ as expected, and the scale factor is proportional to the {\em geometric mean} of the eigenvalues of the matrix $\mathbb{E}[\mathcal{I}_{Y|X,W}]\mathbb{E}^{-1}[\mathcal{I}_{Z|W}]$. Very roughly, this quantity expresses the ratio of how much information is needed to describe a test sample $Y$ given $X$ and $W$ and how much information a labeled sample conveys about $W$.

One can also use Lemma \ref{lem:data.mi} to derive excess risk on the 0-1 loss of smooth model families that are also $\mu$-sensitive in expectation. Doing so yields $\Omega(\sqrt{d/n})$ bounds that scale similarly to standard outer and minimax bounds on generalization error, as well as the scaling laws on the minimum excess risk proven by \citet[Theorem 6]{XuR20}.

\end{document}